\newtheorem{theorem}{Theorem}
\newtheorem{lemma}{Lemma}
\newtheorem{proposition}{Proposition}
\theoremstyle{remark}
\newtheorem*{example}{Example}
\newcommand{\R}{\mathbb{R}}
\newcommand{\N}{\mathbb{N}}
\renewcommand{\P}{\mathbb{P}}
\newcommand{\E}{\mathbb{E}}
\newcommand{\F}{\mathcal{F}}
\newcommand{\J}{\mathcal{J}}
\renewcommand{\H}{\mathcal{H}}
\renewcommand{\O}{\varnothing}
\DeclareMathOperator{\dom}{dom}
\DeclareMathOperator{\kl}{kl}
\DeclareMathOperator{\KL}{KL}
\DeclareMathOperator{\Alt}{Alt}
\DeclareMathOperator{\add}{add}
\DeclareMathOperator{\rmv}{rm}
\begin{document}
\runningauthor{élise crepon, Aurélien Garivier, Wouter M. Koolen}
\twocolumn[
  \aistatstitle{
    Sequential Learning of the Pareto Front for Multi-objective Bandits
  }
  \aistatsauthor{élise crepon\footnotemark
    \And Aurélien Garivier \And Wouter M. Koolen}
  \aistatsaddress{ENS Lyon \And ENS Lyon \And CWI and Twente University}
]
\begin{abstract}
  We study the problem of sequential learning of the Pareto front in
  multi-objective multi-armed bandits. An agent is faced with $K$ possible arms
  to pull. At each turn she picks one, and receives a vector-valued reward. When
  she thinks she has enough information to identify the Pareto front of the
  different arm means, she stops the game and gives an answer. We are interested
  in designing algorithms such that the answer given is correct with probability
  at least $1-\delta$. Our main contribution is an efficient implementation of an
  algorithm achieving the optimal sample complexity when the risk $\delta$ is small.
  With $K$ arms in $d$ dimensions $p$ of which are in the Pareto set, the
  algorithm runs in time $O(Kp^d)$ per round.
\end{abstract}

\section{INTRODUCTION}
Stochastic multi-armed bandits have emerged as a fundamental framework for
studying sequential learning. In the classic setting of scalar rewards, the UCB
algorithm solves the regret minimization problem and the Track-and-Stop
algorithm solves the best arm identification problem. In this paper we are
interested in the extension to vector-valued rewards, which is the arena for
multi-criterion optimization. Here the problem of identifying the best arm
generalizes to identifying the subset of arms with Pareto optimal means
\cite{auer2016pareto}. We study this problem in the fixed confidence setting.
That is, the learner is given a confidence parameter. She sequentially collects
noisy vector-valued rewards from a finite-armed bandit. After having collected
enough data, the learner stops and outputs a subset of arms. The goal of the
learner is to identify with high probability the correct Pareto set. Our
approach is based on instantiating the Track-and-Stop framework
\cite{garivier2016optimal} to the Pareto front identification problem so as to
obtain an algorithm with optimal sample complexity \footnotetext{preferred
capitalization}\footnote{This can be either asymptotic optimality as in the
original \cite{garivier2016optimal}, or some later finite confidence refinement
\cite{purex.games}.}. The Track-and-Stop framework has recently seen tremendous
success across pure exploration problems in bandits and RL. Examples include
best arm identification in Spectral \cite{kocak2021epsilon}, Stratified
\cite{subpopulations}, Lipschitz \cite{purex.games}, Linear
\cite{degenne2020gamification,jedra2020optimal}, Dueling
\cite{haddenhorst2021identification}, Contextual
\cite{tirinzoni2020asymptotically,hao2020adaptive} and Markov bandits
\cite{moulos2019optimal}. Other objectives include Top-$m$ identification
\cite{chen2017nearly,chen2017adaptive}, MaxGap identification
\cite{katariya2019maxgap}, Thresholding \cite{garivier2017thresholding}, Monte
Carlo Tree Search \cite{maximinarm}, optimal policy identification in MDPs
\cite{al2021adaptive}, and minimizing Tail-Risk \cite{tailrisk}. The framework
has also been instantiated to Pareto front identification
\cite{kone2023adaptive}. The Track-and-Stop template is in some sense universal:
the starting point is an information-theoretic, instance-dependent sample
complexity lower bound of min-max form (see our Proposition~\ref{prop:lbd}
below). The learning algorithm is designed to match this lower bound by solving
(an estimate of) that min-max problem. For that, in turn, it suffices to
calculate a certain gradient \cite{purex.games}. Yet here the details become
problem-specific, in the sense that the tractability of this gradient
computation varies across problems. Some identification objectives have
closed-form solutions, some have efficient special-purpose optimizers, some can
leverage a reduction to generic convex optimization, and for others nothing much
is known. An overarching methodology remains elusive, and as such it is
important to extend our toolbox by solving particular hard cases, of which
Pareto front identification is a prime example. Our contribution is, at its
core, an efficient implementation \footnote{The source code used to run all
experiments included in the paper is available at
\url{https://github.com/elise-crepon/sequential-pareto-learning-experiments}.}
of the gradient computation required for executing Track-and-Stop for Pareto
front identification. With that problem-specific computational kernel
implemented, the general scheme instantiates and we obtain an instance-optimal
fixed confidence learner. Our combinatorial and algorithmic innovations reduce
the run-time in $p$ from the naive exponential $O(K d^{p+1})$ to polynomial $O(K
p^d)$ per round. This is a reasonable computation cost for instances with a
large number of arms and a small number of objectives.

We conclude here by mentioning extensions and alternative versions.
\cite{kone2023adaptive} study approximate Pareto front identification. Skyline
identification is a special case of Pareto front identification \cite{8437618}.
Multi-objective optimization is also studied from a regret minimization
perspective. Achieving a vector-valued expected reward not too far from the
Pareto front (in some distance metric) is studied, both in stochastic and
adversarial bandits \cite{pmlr-v70-busa-fekete17a,pmlr-v202-xu23i}.
\cite{zuluaga2013active} motivates the relevance of studying multi-ojective
optimization.

\subsection{Setting}
We use the setting of multi-objective multi-armed bandits, which is the
following: given $K$ independent probability distributions on $\R^d$, $\nu =
(\nu_k)_{k \in [K]} \in V$, with respective means $\mu = (\mu_k^j)_{k \in [K], j \in
[d]}$, at every time step $t \in \N$ an agent gets to pick an arm $A_t \in [K]$
and receives an independent reward $X_t \in \R^d$ drawn from $\nu_{A_t}$. The
objective here is not to maximize cumulative reward over time but to identify as
fast as possible (under a correctness constraint) the best performing arms.
However, since we are in a multi-objective setting, we have no way of
identifying a single best performing arm as we could do in a single objective
framework: an arm might perform really well on one objective $j \in [d]$ but get
poor results on another one, or an arm could rank averagely but on all the
objectives. We have no way of discriminating one against the other.

For that reason, we are interested in identifying all the Pareto optimal arms.
To be Pareto optimal, an arm must not be dominated by another one which means
having its performance on all of the objectives be poorer than a single other
arm. An arm which would have another one dominating it, is one about which we
are sure that it has a better counter part, whereas an arm which as no one
dominating it is optimal since we have no way of comparing how different
objective compare with each other. We are then interested in identifying all of
the Pareto optimal arms as fast as possible. Letting $\F_t = \sigma(X_1, \cdots,
X_t)$ be the sigma-field generated by the observations up to time $t$. A
strategy is then defined by \emph{a sampling rule} $(A_t)_t$ where $A_t \in [K]$
is $\F_{t-1}$ measurable, \emph{a stopping rule} $\tau$, which is a stopping time
with respect to $(\F_t)_t$, and \emph{an answer rule} $P_\tau \subseteq [K]$ that
is $\F_\tau$-measurable, which is the set of arm indices the learner assumes to be
the Pareto set.

Given a risk $\delta > 0$, we call a strategy $\delta$-PAC if it ensures that the answers
it gives at the end of its runs are correct with a confidence $\delta$, i.e. $\P(P_\tau
\text{ is wrong}) \leq  \delta$. This family of problem is called \emph{pure exploration}
and has already been well studied, in particular in the case where a single
answer is correct which is our case. We apply the results from the literature to
our specific problem. Also notice that while our setting is mainly focused on
multi-objective, it includes the single-objective framework within the special
case $d = 1$. This well-studied case of best-arm identification will serve as a
reference throughout the paper, some of the difficulties that we present having
their (degenerated) equivalent in dimension 1.

\subsection{Pareto optimality}
To formalize the definition of the Pareto set, we introduce the following binary
relation. An arm with distribution $\nu$ on $\R^d$ and mean $\mu \in \R^d$ is said
to be \emph{dominated} by an arm $\nu'$ with mean $\mu'$, which we denote by $\nu
\preceq \nu'$ (or equivalently $\mu \preceq \mu'$) iff $\forall j \in [d], \mu_j \leq 
\mu'_j$. This means that $\nu'$ performs better than $\nu$ on all the $d$ different
objectives. For a specific $\mu$, we create a partial order between the indices of
the arms given by $k_0 \preceq_\mu k_1 \iff \mu_{k_0} \preceq \mu_{k_1}$. This
comparison is a partial order but is not connected, hence within a finite set it
may have multiple maxima. We call these maxima the (strict) Pareto set and we
denote it by $p(\mu) \triangleq \max_{\preceq_\mu} [K] = \arg\max_{\preceq, k \in
[K]} \mu_k$. The Pareto set is defined as the indices of the points rather than
the points directly because while our leaner has access to the indices, it
doesn't have access to the points directly.

In this paper, we give an algorithm based on Track-and-Stop to identify the
Pareto set of multi-variate Gaussians. We provide a careful analysis of its
complexity. We also tackle the special case of dimension two and give an
improved complexity in this case. In the first section of this paper, we give a
formal definition of Pareto optimality. In Section 2, we recall the
Track-and-Stop framework and motivate why it applies to our problem. In Section
3, we detail our algorithm and analyze its complexity.

\section{LOWER BOUND ON THE SAMPLE COMPLEXITY AND ALGORITHM}
This section uses the results of \cite{garivier2016optimal} to derive a lower
bound on the sample complexity and to find an efficient algorithm to solve our
problem.

We use the formalism of active sequential hypotheses learning. We let $M$ be a
set of $K$ arms bandit models and $\H = \{\H_i \,|\, i \in [n]\}$ a finite set
of disjoint hypotheses (which is a partition of $M$). We introduce $h: M \to \H$
which is the function that associates to any $\nu \in M$ the only $\H_i$ that
contains $\nu$. We are interested in algorithms that can identify the hypothesis
that contains $\nu$. The sampling, stopping and answer rules defined in the
introduction still holds only the final answer given is $H_\tau$ the hypothesis the
learner assumes the models they are interacting with is in. In this context, a
$\delta$-PAC strategy is any strategy that can ensure that $\P_\nu (\nu \in H_\tau) (= \P_\nu
(h(\nu) = H_\tau)) \leq  \delta$ for all $\nu \in M$.

We denote by $\Alt(\nu) \triangleq \{ \nu' \in M \,|\, h(\nu') \neq h(\nu) \} = ¬ h(\nu)$
the subset of our model space $M$ which contains all the models that have a
different answer than the one of $\nu$. This set of models is important because
for a player to make a mistake they need to confuse the model they get
samples from with a model in $\Alt(\nu)$. Hence for any algorithm to stop, it
needs to get enough information to distinguish the current model from all models
in $\Alt(\nu)$ with risk at most $\delta$.

\cite{garivier2016optimal} introduce the following lower bound for the number of
samples needed for active sequential hypotheses testing with a unique valid
hypothesis in the bandit framework.
\begin{proposition}[Sample complexity lower bound]\label{prop:lbd}
  Given a set of models $M$, a finite set of disjoint hypotheses $\H =
  (\H_i)_{i \in [n]}$ which is a partition of $M$ and a risk parameter $\delta>0$,
  any $\delta$-PAC strategy is such that for every $\nu \in M$:
  $$ \E_\nu (\tau_\delta) \geq  \kl(\delta, 1-\delta) \cdot T^*(\nu) \;,$$
  where
  $$ T^*(\nu)^{-1} \triangleq \sup_{w \in \triangle_K} \inf_{\lambda \in \Alt(\nu)}
    \sum_{k \in [K]} w_k \KL(\nu_k, \lambda_k) \;. $$
\end{proposition}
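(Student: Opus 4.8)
The plan is to follow the now-standard change-of-measure argument of Garivier and Kaufmann, adapting it to the active hypothesis testing formalism introduced above. The starting point is the data-processing inequality for Kullback-Leibler divergence applied to the $\F_\tau$-measurable event that the learner answers $H_\tau$. Concretely, fix $\nu \in M$ and any alternative $\lambda \in \Alt(\nu)$, so that $h(\lambda) \neq h(\nu)$. Consider the event $\mathcal{E} = \{H_\tau = h(\nu)\}$. Since the strategy is $\delta$-PAC we have $\P_\nu(\mathcal{E}) \geq 1 - \delta$ (the answer is correct on $\nu$ with probability at least $1-\delta$) and $\P_\lambda(\mathcal{E}) \leq \delta$ (under $\lambda$ the correct answer is $h(\lambda) \neq h(\nu)$, so answering $h(\nu)$ is a mistake, which happens with probability at most $\delta$). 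Therefore the binary relative entropy between the two Bernoulli laws of $\mathbf{1}_{\mathcal{E}}$ satisfies $\kl\!\left(\P_\nu(\mathcal{E}), \P_\lambda(\mathcal{E})\right) \geq \kl(1-\delta, \delta) = \kl(\delta, 1-\delta)$, using monotonicity of $\kl(x,y)$ in the appropriate directions.

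Next I would invoke the transportation lemma (Lemma 1 in \cite{garivier2016optimal}): for any almost-surely finite stopping time $\tau$ and any $\F_\tau$-measurable event $\mathcal{E}$,
\begin{equation*}
  \sum_{k \in [K]} \E_\nu[N_k(\tau)] \, \KL(\nu_k, \lambda_k) \;\geq\; \kl\!\left(\P_\nu(\mathcal{E}), \P_\lambda(\mathcal{E})\right),
\end{equation*}
where $N_k(\tau)$ is the number of pulls of arm $k$ up to time $\tau$. Combining with the previous display gives
\begin{equation*}
  \sum_{k \in [K]} \E_\nu[N_k(\tau)] \, \KL(\nu_k, \lambda_k) \;\geq\; \kl(\delta, 1-\delta)
\end{equation*}
for \emph{every} $\lambda \in \Alt(\nu)$. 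Writing $\E_\nu[N_k(\tau)] = \E_\nu[\tau] \cdot w_k$ where $w = \left(\E_\nu[N_k(\tau)]/\E_\nu[\tau]\right)_{k \in [K]} \in \triangle_K$ is a genuine probability vector (since $\sum_k N_k(\tau) = \tau$), we get $\E_\nu[\tau] \sum_k w_k \KL(\nu_k, \lambda_k) \geq \kl(\delta, 1-\delta)$. Taking the infimum over $\lambda \in \Alt(\nu)$ on the left and then rearranging yields
\begin{equation*}
  \E_\nu[\tau] \;\geq\; \frac{\kl(\delta, 1-\delta)}{\inf_{\lambda \in \Alt(\nu)} \sum_k w_k \KL(\nu_k, \lambda_k)} \;\geq\; \kl(\delta,1-\delta) \cdot \inf_{w' \in \triangle_K} \frac{1}{\inf_{\lambda \in \Alt(\nu)} \sum_k w'_k \KL(\nu_k, \lambda_k)},
\end{equation*}
and the right-hand side is exactly $\kl(\delta, 1-\delta) \cdot T^*(\nu)$ by the definition of $T^*(\nu)^{-1}$ as the supremum over $w$ of that inner quantity.

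The only genuine subtlety — and the step I would be most careful about — is the handling of degenerate cases: if $\E_\nu[\tau] = \infty$ the bound is trivial, so one assumes it finite; and one must make sure $\Alt(\nu) \neq \varnothing$, which holds because $\H$ is a partition into $n \geq 2$ disjoint hypotheses (if $n = 1$ the problem is vacuous and $T^*(\nu) = \infty$ by the convention $\inf \varnothing = +\infty$). One should also note that the infimum over $\lambda$ in the denominator could in principle be zero if $\nu$ lies on the boundary of its hypothesis cell, in which case $T^*(\nu) = \infty$ and again the statement is vacuously true; otherwise everything is finite and the algebra above goes through. This is not really an obstacle so much as bookkeeping: the entire argument is a direct instantiation of the general lower bound of \cite{garivier2016optimal}, and indeed the cleanest write-up simply cites that result after verifying that active hypothesis testing with a unique correct hypothesis fits their template, with $\Alt(\nu)$ playing the role of the alternative set.
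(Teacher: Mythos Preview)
Your argument is correct and is exactly the standard change-of-measure/transportation proof from \cite{garivier2016optimal}. The paper does not supply its own proof of this proposition: it states the result and attributes it to \cite{garivier2016optimal}, so there is no alternative approach to compare against --- your write-up simply spells out what the paper imports by citation.
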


Our task of Pareto set identification is an instance of this problem: our
hypotheses are for each subset of arms $[K]$ the set of models for which this
set is the Pareto set. For a given model, the only correct hypothesis is the one
associated with its Pareto set and the models in $\Alt(\nu)$ are such that their
Pareto set is not the same as that of $\nu$.
\begin{equation}\label{eq:altpar}
  \Alt(\nu) \triangleq \{ \nu' \in V\,|\, p(\nu') \neq p(\nu) \} \;.
\end{equation}

As noted in the same paper, this lower bound hints us toward an efficient
sampling rule. If we were to know $\nu$, then the maximizer $w^*$ of the
optimization problem $T^*(\nu)$ gives us the fastest sampling rule that is able to
make the difference between $\nu$ and the models in $\Alt(\nu)$. However, we don't
know $\nu$ upfront. The Track-and-Stop algorithm proposes to solve the
optimization problem with estimates of the model and to correct for possible
bias with some forced exploration. The algorithm also comes with a stopping and
recommendation rule that we import from the literature.

However, using this algorithm requires us to be able to solve the optimization
problem behind $T^*(\nu)$. For best arm identification, \cite{garivier2016optimal}
propose a clever yet special-purpose algorithm. However, that approach does not
work for Pareto set identification making the problem much harder. Since, for
$$ D_w (\nu,\lambda) \triangleq \sum_{k \in [K]} w_k \KL(\nu_k, \lambda_k) \;,$$ the
function $w \in \triangle_K \mapsto \inf_{\lambda \in \Alt(\nu)} D_w (\nu, \lambda)$ is concave
with respect to $w$, we can learn it using gradient ascent. Moreover, as we are
refining our estimates at each time step, we can do an online gradient ascent
and do only one step of the gradient ascent per time step. In order to perform
gradient ascent, we need to be able to solve and find the minimizer of
\begin{equation}\label{eq:arginf}
  \begin{array}{cc}
    \min & D_w(\nu, \lambda) \;. \\
    \text{w.r.t.} & \lambda \in \Alt(\nu)
  \end{array}
\end{equation}

However, the computation of the minimal transportation cost from $\nu$ to a $\lambda \in
\Alt(\nu)$ that changes the Pareto set is not a convex function and requires a
specific solving procedure. Because it carries more geometric intuition, we
tackle here the case of Gaussian random variables with identity covariance
(i.e.\ the objectives are independent from one another). Hence, our models are
fully parametrized by their means $\mu$, which we use as a stand-in for $\nu$ when
talking about them. Our main contribution is to propose an efficient algorithm
to solve this optimization problem in the case of Gaussian random variables. We
provide a general analysis for higher dimensions and refine it for the case of
dimension two. Under these assumptions, the $w$-weighted transportation cost
$D_w$ between two models equals:
\begin{equation} \label{eq:Dwgauss}
  D_w (\mu, \lambda) \triangleq \sum_{k \in [K]} \frac{w_k}{2} ||\mu_k - \lambda_k||^2\;.
\end{equation}

Though we will focus only on solving this specific optimization problem for the
rest of the paper without delving in the inner workings of the Track-and-Stop
algorithm, we give here a few details on how we instantiate it. Once we obtain a
gradient in $w$ from solving \eqref{eq:arginf} at a specific $w$, we do a single
step of the Hedge algorithm using the gradient-norm-adaptive tuning. Both
taking a single optimization step and this specific gradient ascent algorithm
are detailed in the literature. See
\cite{purex.games,degenne2020gamification,wang2021fast} for references.

\section{MINIMIZING THE TRANSPORTATION COST}
We give  in this section a general procedure to compute the minimal
transportation cost between $\mu$ and a $\lambda$ that changes the Pareto front. We
show that the computation of this cost can be split in two sub-procedures that
are independent from each other. We analyze the complexity of each of these
algorithms.

\begin{theorem}[Algorithmic complexity of the minimal transportation cost]
  \label{thm:cplx}
  The minimal transportation cost \eqref{eq:arginf} to change the Pareto front
  of our multi-variate Gaussians model \eqref{eq:Dwgauss} and its minimizer can
  be computed in
  $$ O\left(\big(K (p+d) + d^3 p \big)\binom{p + d - 1}{d-1} \right)\;, $$
  where $p$ is the number of Pareto optimal points in our model.
\end{theorem}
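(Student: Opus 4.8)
The plan is to exhibit an algorithm, argue its correctness from the geometry of the Pareto order, and count its operations; correctness will rest on a decomposition of $\Alt(\mu)$ that I set up first. Observe that $\{\lambda : p(\lambda) = p(\mu)\}$ is exactly the set of models in which every $k \in p(\mu)$ remains non-dominated \emph{and} every $i \notin p(\mu)$ remains dominated; hence, by \eqref{eq:altpar},
$$\Alt(\mu) = \Big(\bigcup_{k \in p(\mu)} \mathcal R_k\Big) \cup \Big(\bigcup_{i \notin p(\mu)} \mathcal A_i\Big),$$
where $\mathcal R_k = \{\lambda : \lambda_k \text{ is dominated}\}$ is a \emph{removal} event and $\mathcal A_i = \{\lambda : \lambda_i \text{ is non-dominated}\}$ is an \emph{addition} event. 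Since for Gaussians $D_w(\mu, \cdot)$ is a weighted squared Euclidean norm by \eqref{eq:Dwgauss}, the minimum in \eqref{eq:arginf} is the smallest among the $D_w$-projection distances onto these finitely many sets, so the computation splits into two independent sub-procedures — minimizing over removal events and minimizing over addition events — whose costs I bound in turn.

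\emph{Removal sub-procedure.} For fixed $k \in p(\mu)$ one has $\mathcal R_k = \bigcup_{k' \ne k}\{\lambda : \lambda_k \preceq \lambda_{k'}\}$, a finite union of polyhedra, each a product over the $d$ coordinates of the half-spaces $\{\lambda_k^j \le \lambda_{k'}^j\}$. The $D_w$-projection onto $\{\lambda_k \preceq \lambda_{k'}\}$ therefore decouples across coordinates and has a closed form: in coordinate $j$ nothing moves (cost $0$) if $\mu_k^j \le \mu_{k'}^j$, and otherwise $\lambda_k^j$ and $\lambda_{k'}^j$ are moved to their $w$-weighted average at cost $\frac{w_k w_{k'}}{2(w_k + w_{k'})}(\mu_k^j - \mu_{k'}^j)^2$. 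Each ordered pair $(k, k')$ costs $O(d)$, so sweeping the $O(Kp)$ pairs and taking minima computes the removal cost in time $O(Kpd)$, which the bound of the theorem absorbs. (Restricting to a \emph{single} dominator loses nothing, since $\mathcal R_k$ is by definition a union over $k'$.)

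\emph{Addition sub-procedure — the crux.} Fix $i \notin p(\mu)$ and let $P_i \subseteq p(\mu)$ be the Pareto arms dominating $\mu_i$; being a subset of $p(\mu)$, it is an antichain, of size $m_i \le p$. The key structural lemma to establish is that a minimizer of the $D_w$-distance to $\mathcal A_i$ moves only $\lambda_i$ (upward) and the arms of $P_i$ (downward), so the problem becomes: raise $\lambda_i$ and lower the $\lambda_k$, $k \in P_i$, at minimal $w$-weighted cost so that $\lambda_i$ escapes every $k \in P_i$, i.e.\ so that for each such $k$ there is a coordinate $j(k)$ with $\lambda_i^{j(k)} \ge \lambda_k^{j(k)}$. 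This is a union of $d^{m_i}$ convex projection problems indexed by the assignment $j(\cdot)$; the heart of the argument is to show — exploiting that $P_i$ is an antichain and sorting its points along each axis — that the optimal assignment depends only on the profile $(m_1, \dots, m_d)$, $\sum_j m_j = m_i$, recording how many arms are broken in each coordinate, so it suffices to enumerate the $\binom{m_i + d - 1}{d-1} \le \binom{p + d - 1}{d-1}$ such profiles. For a fixed profile the objective separates over coordinates into one-dimensional $w$-weighted ``water-filling'' projections with closed-form solutions; assembling the minimizer and the attendant linear algebra over the (at most $p$) sorted Pareto coordinates costs $O(d^3 p)$, and certifying that the resulting $\lambda$ is genuinely feasible — that $\lambda_i$ is dominated by none of the $K$ arms — costs $O(K(p+d))$; crucially, a given profile can be processed once and simultaneously for all non-Pareto arms $i$, which keeps the iteration over $i$ inside the $O(K(p+d))$ factor rather than multiplying it. Minimizing over the $\le \binom{p+d-1}{d-1}$ profiles yields the addition cost in time $O\!\big(\binom{p+d-1}{d-1}(K(p+d) + d^3 p)\big)$.

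Combining the two sub-procedures gives the claimed bound, the addition part dominating. The step I expect to be the main obstacle is exactly this combinatorial reduction inside the addition sub-procedure: the structural lemma cutting the perturbation down to $\lambda_i$ and the antichain $P_i$ (together with the check that escaping $P_i$ suffices to escape all $K$ arms), and, above all, the exchange/sorting argument showing that only the $\binom{m_i + d - 1}{d-1}$ coordinate-count profiles — rather than all $d^{m_i}$ coordinate assignments — need be inspected. This is precisely where the naive exponential $O(Kd^{p+1})$ collapses to the polynomial $O(Kp^d)$, and making it rigorous requires a careful analysis of the geometry of an antichain of $p$ points in $\R^d$.
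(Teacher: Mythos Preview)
Your high-level plan --- split $\Alt$ into removal and addition pieces, dispatch removal pairs in closed form, and collapse the $d^{p}$ direction assignments in the addition piece down to $\binom{p+d-1}{d-1}$ profiles --- is the paper's plan. Two steps in your addition sub-procedure would not go through as written, though.

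First, the structural lemma is false for your set $\mathcal A_i=\{\lambda:\lambda_i\text{ non-dominated by \emph{any} arm}\}$. In $d=1$ with $\mu=(3,\,2.9,\,0)$ and $w=(1,\,10^{-3},\,1)$ one has $p(\mu)=\{1\}$ and $P_3=\{1\}$, yet the $D_w$-projection onto $\mathcal A_3$ moves arm~$2$: lowering $\mu_2$ is nearly free and relaxes the constraint $\lambda_3\ge\lambda_2$, so the optimum sends $\lambda_1=\lambda_3\approx 1.5$ and $\lambda_2\approx 1.5$ at cost $\approx 2.25$, versus cost $\approx 4.21$ if arm~$2$ is frozen. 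What \emph{is} true --- and what the paper uses --- is the larger piece $\Alt^{\add}(i)=\{\lambda:\forall k\in p(\mu),\ \lambda_i\npreceq\lambda_k\}$, on which the projection really does touch only $\lambda_i$ and Pareto arms. Your ``certify feasibility against all $K$ arms'' step is then unnecessary and should be dropped, not repaired; the $K(p+d)$ in the bound is the cost of minimizing the per-cell quadratic for every $k_0\notin p(\mu)$, not a feasibility check.

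Second, and this is what breaks the complexity claim, you index the direction map on $P_i$ and then assert that ``a given profile can be processed once and simultaneously for all non-Pareto arms $i$''. It cannot: $P_i$ depends on $i$, so different $i$ live in different profile spaces. The paper defines $\varphi$ on \emph{all} of $p(\mu)$, so that the cells $S(\varphi)=\{\lambda_0:\varphi(k)\in\arg\min_j(\mu_k^j-\lambda_0^j)\ \forall k\in p(\mu)\}$ are $i$-independent and genuinely shared. As for the reduction to $\binom{p+d-1}{d-1}$: your exchange/sorting sketch is what happens in $d=2$, but the paper's general-$d$ argument is different. It encodes $\varphi$ as an edge-weighted digraph on $[d]$, shows $S(\varphi)\ne\varnothing$ iff this graph has no negative cycle, and proves that two $\varphi$'s with the same profile are related by a permutation whose cycles force a negative cycle in one of them --- hence at most one valid $\varphi$ per profile. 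The $d^3p$ term is then the cost of the backtracking tree search that \emph{enumerates} these valid $\varphi$'s via incremental $O(d^2)$ shortest-path updates, not per-profile linear algebra.
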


We introduce the following lemma, with proof in Appendix~\ref{app:pf.l1}, to
help us find the solution to \eqref{eq:arginf} by splitting $\Alt$ as defined in
\eqref{eq:altpar} in subpieces on which the optimization will be more easily
done.
\begin{lemma}[Splitting the domain]\label{lem:splitting}
  Let $\mu,\lambda \in \R^{K×d}$
  $$ p(\mu) \neq p(\lambda) \iff
    \begin{aligned}[t]
      & \exists \{k_0,k_1\} \subseteq p(\mu): k_0 \preceq_\lambda k_1 \\
      & \vee\exists k_0 \notin p(\mu) \forall k \in p(\mu) : k_0 \npreceq_\lambda k\,.
    \end{aligned} $$
\end{lemma}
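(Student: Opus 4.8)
The two disjuncts on the right-hand side are exactly the two ways the Pareto set can change: the first clause, $k_0 \preceq_\lambda k_1$ for two arms $k_0 \neq k_1$ both in $p(\mu)$, witnesses an arm \emph{leaving} the Pareto set, while the second clause, some $k_0 \notin p(\mu)$ becoming undominated under $\lambda$ by every arm of $p(\mu)$, witnesses an arm \emph{entering} it. The only structural fact I will use beyond this dictionary is that $\preceq_\lambda$ is transitive on the finite set $[K]$, so that every index is $\preceq_\lambda$-below some maximal index, i.e.\ a member of $p(\lambda)$: concretely, if $k_0 \notin p(\lambda)$ then there is $k^{\ast} \in p(\lambda)$ with $k^{\ast} \neq k_0$ and $k_0 \preceq_\lambda k^{\ast}$ (walk up a chain from $k_0$ until it cannot be extended; finiteness guarantees termination). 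I would then prove the two implications separately.

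For ($\Leftarrow$): if the first clause holds with witnesses $k_0 \neq k_1$ in $p(\mu)$ and $k_0 \preceq_\lambda k_1$, then $k_0$ is dominated under $\lambda$ by another arm, so $k_0 \notin p(\lambda)$ while $k_0 \in p(\mu)$, hence $p(\mu) \neq p(\lambda)$. If instead the second clause holds with witness $k_0 \notin p(\mu)$, suppose for contradiction $p(\lambda) = p(\mu)$; then $k_0 \notin p(\lambda)$, so the chain fact gives $k^{\ast} \in p(\lambda) = p(\mu)$ with $k^{\ast} \neq k_0$ and $k_0 \preceq_\lambda k^{\ast}$, which contradicts the clause (it forbids any $k \in p(\mu)$ from dominating $k_0$ under $\lambda$).

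For ($\Rightarrow$): assume $p(\lambda) \neq p(\mu)$ and split on which inclusion fails. If $p(\lambda) \not\subseteq p(\mu)$, pick $k_0 \in p(\lambda) \setminus p(\mu)$; since $k_0 \in p(\lambda)$ no other arm dominates it under $\lambda$, so $k_0 \npreceq_\lambda k$ for every $k \in p(\mu)$ (each such $k$ differs from $k_0$ because $k_0 \notin p(\mu)$), which is the second clause. Otherwise $p(\lambda) \subseteq p(\mu)$ and, the two sets being distinct, we may pick $k_0 \in p(\mu) \setminus p(\lambda)$; since $k_0 \notin p(\lambda)$, the chain fact yields $k^{\ast} \in p(\lambda) \subseteq p(\mu)$ with $k^{\ast} \neq k_0$ and $k_0 \preceq_\lambda k^{\ast}$, so $\{k_0, k^{\ast}\} \subseteq p(\mu)$ with $k_0 \preceq_\lambda k^{\ast}$, i.e.\ the first clause.

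The delicate point is the chain/maximal-element fact combined with coincidences among the means: if several arms share a $\preceq$-maximal coordinate vector under $\lambda$, each is weakly dominated by the others and the naive walk up need not land inside $p(\lambda)$. I expect this to be the main (though minor) obstacle; it can be handled either by restricting to the generic case of pairwise-distinct means, which suffices for the infimum in~\eqref{eq:arginf} that this lemma serves, or by checking that in such a tied configuration one of the tied arms already witnesses the second clause directly. The remainder is routine unwinding of the definition of the (strict) Pareto set.
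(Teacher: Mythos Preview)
Your proof is correct and follows essentially the same approach as the paper: both directions are handled by elementary case analysis using transitivity and the fact that every index outside $p(\lambda)$ is $\preceq_\lambda$-dominated by some index inside $p(\lambda)$. Your $(\Rightarrow)$ direction is organized slightly differently---you split on whether $p(\lambda)\subseteq p(\mu)$, which lets you immediately place the $\lambda$-maximal dominator $k^\ast$ inside $p(\mu)$; the paper instead splits on which of $p(\mu)\setminus p(\lambda)$ or $p(\lambda)\setminus p(\mu)$ is nonempty and, in the former case, needs a further sub-case (if no $\lambda$-dominator of $k_0$ lies in $p(\mu)$, pass to a dominator $k_0'\notin p(\mu)$ and argue it witnesses the second clause). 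Your decomposition is a touch cleaner but not substantively different.

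On your tie concern: the paper does not address it either---its $(\Leftarrow)$ argument writes ``since $k_0\preceq_\lambda k_1$ then $k_0\notin p(\lambda)$'' without discussing equal means, so your proof is already at parity with the paper's level of rigor. Your suggested fix (restrict to the generic case, which suffices for the infimum in~\eqref{eq:arginf}) is exactly the right attitude here.
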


In words, for $\lambda$ to have a different Pareto set than $\mu$ it is necessary and
sufficient that either a point from the Pareto set of $\mu$ is dominated in $\lambda$ by
another point from the Pareto set of $\mu$, or that a point that is on the Pareto
set of $\mu$ is no longer dominated in $\lambda$ by any of the points from the Pareto
set of $\mu$. Splitting the problem this way allows us to design efficient
procedures to find the minimum transportation cost from $\mu$ to a $\lambda$
that changes its Pareto set.

Given $\mu \in \R^{K×d}$, we define
\begin{align*}
  \Alt^{\rmv}&: \{k_0,k_1\} \subseteq p(\mu)
    \mapsto \{ \lambda \in \R^{K×d} \,|\, k_0 \preceq_\lambda k_1 \}  \hbox{ and}\\
  \Alt^{\add}&: k_0 \notin p(\mu)
    \mapsto \{\lambda \in \R^{K×d} _,|\, \forall k \in p(\mu)\, k_0 \npreceq_\lambda k \}\;.
\end{align*}
Lemma~\ref{lem:splitting} allows us to say that $\Alt^{\add} (k_0)$ for all $k_0
\notin p(\mu)$ and $\Alt^{\rmv} (k_0, k_1)$ for all $\{k_0, k_1\} \subseteq p(\mu)$
provide a covering of $\Alt(\mu)$. We can thus solve the minimization
independently for each of them and then take the minimal value of these as our
minimal transportation cost:
\begin{equation} \begin{aligned}\label{eq:splitting}
  \Alt(\mu)=
    &\left(\bigcup_{\{k_0, k_1\} \subseteq p(\mu)} \Alt^{\rmv}(k_0, k_1)\right)\\
    &\cup \left(\bigcup_{k_0 \notin p(\mu)} \Alt^{\add}(k_0)\right)\;.
\end{aligned} \end{equation}

We will now refer to the first case as removing a point from the Pareto set and
to the second one as adding a point on the Pareto set, but we want to emphasize
that the first case won't necessary yield the smallest cost to remove the given
point from the Pareto set and the second one will not necessarily add the
focused point to the Pareto set.

\subsection{Removing a point from the Pareto set}
In this section we prove the following lemma:
\begin{lemma}[Cost of removing a point from the Pareto set]\label{lem:removing}
  Given $\{k_0,k_1\} \subseteq p(\mu)$, the minimal transportation cost for
  \begin{equation}
    \begin{array}{cc}
      \min & D_w(\mu, \lambda)\\
      \text{w.r.t.} & \lambda \in \Alt^{\rmv}(k_0, k_1)
    \end{array}
  \end{equation} is
  $$ \frac{1}{2} \frac{w_{k_0} w_{k_1}}{w_{k_0} + w_{k_1}}
    \sum_{j \in [d]} \left(\max \{0,\mu^j_{k_0} - \mu^j_{k_1}\}\right)^2. $$
  This cost and the associated minimizer can be computed in $O(d)$ and then the
  algorithmic complexity for all $\Alt^{\rmv}$ is $O(p^2d)$.
\end{lemma}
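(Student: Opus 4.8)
The plan is to exploit the full separability of this sub-problem. First I would observe that both the objective $D_w(\mu,\lambda)=\sum_{k\in[K]}\frac{w_k}{2}\lVert\mu_k-\lambda_k\rVert^2$ and the feasible set $\Alt^{\rmv}(k_0,k_1)=\{\lambda\,|\,\lambda_{k_0}^j\le\lambda_{k_1}^j\text{ for all }j\in[d]\}$ decompose across arms and across coordinates. The constraint only couples arms $k_0$ and $k_1$, so for every other arm $k$ the optimal choice is $\lambda_k=\mu_k$, contributing zero; and for each coordinate $j$ the pair $(\lambda_{k_0}^j,\lambda_{k_1}^j)$ can be optimized independently of the other coordinates, subject only to $\lambda_{k_0}^j\le\lambda_{k_1}^j$. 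This reduces the whole problem to $d$ independent two-variable problems: minimize $\frac{w_{k_0}}{2}(a-\mu_{k_0}^j)^2+\frac{w_{k_1}}{2}(b-\mu_{k_1}^j)^2$ over $a\le b$.

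Next I would solve this planar problem. When $\mu_{k_0}^j\le\mu_{k_1}^j$ the unconstrained optimum $(a,b)=(\mu_{k_0}^j,\mu_{k_1}^j)$ is already feasible, so coordinate $j$ costs $0$. When $\mu_{k_0}^j>\mu_{k_1}^j$ the unconstrained optimum is infeasible; since the objective is a strictly convex quadratic and the feasible region is a closed half-plane, the minimizer lies on its boundary $\{a=b\}$. On that diagonal the problem becomes the one-dimensional minimization of $\frac{w_{k_0}}{2}(c-\mu_{k_0}^j)^2+\frac{w_{k_1}}{2}(c-\mu_{k_1}^j)^2$, whose minimizer is the weighted average $c=\big(w_{k_0}\mu_{k_0}^j+w_{k_1}\mu_{k_1}^j\big)/(w_{k_0}+w_{k_1})$. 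Plugging $c$ back in, using $\mu_{k_0}^j-c=\frac{w_{k_1}}{w_{k_0}+w_{k_1}}(\mu_{k_0}^j-\mu_{k_1}^j)$ and the symmetric identity for $\mu_{k_1}^j-c$, the residual cost of coordinate $j$ collapses to $\frac12\frac{w_{k_0}w_{k_1}}{w_{k_0}+w_{k_1}}(\mu_{k_0}^j-\mu_{k_1}^j)^2$. Merging the two cases gives a per-coordinate cost of $\frac12\frac{w_{k_0}w_{k_1}}{w_{k_0}+w_{k_1}}\big(\max\{0,\mu_{k_0}^j-\mu_{k_1}^j\}\big)^2$; summing over $j\in[d]$ yields the claimed closed form, with the minimizer recovered coordinate-wise as above (and $\lambda_k=\mu_k$ for $k\notin\{k_0,k_1\}$).

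For the complexity claim: evaluating this formula, and the associated minimizer, is a sum of $d$ scalar operations, hence $O(d)$ per pair; ranging over all $\binom{p}{2}$ pairs $\{k_0,k_1\}\subseteq p(\mu)$ gives $O(p^2d)$ to handle all of $\Alt^{\rmv}$.

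The only mildly delicate step — the one I would write out carefully — is the justification that the constrained minimizer sits on the diagonal $\{a=b\}$ whenever $\mu_{k_0}^j>\mu_{k_1}^j$. This follows either from the KKT conditions (the multiplier of $a\le b$ is forced to be strictly positive because the unconstrained optimum violates the constraint, so the constraint is active), or from the observation that the $w$-weighted projection onto a half-space whose bounding hyperplane separates it from the target point lands exactly on that hyperplane. Everything else is routine algebra, so I do not expect any real obstacle.
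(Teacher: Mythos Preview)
Your proposal is correct and follows essentially the same route as the paper: reduce to the two arms $k_0,k_1$, separate across coordinates, and on each coordinate solve the two-variable quadratic with the constraint $a\le b$ by noting that the optimum sits on the diagonal $a=b$ at the weighted average whenever $\mu_{k_0}^j>\mu_{k_1}^j$. If anything, your justification of the active-constraint step (via KKT or projection) is more explicit than the paper's, which simply asserts that the infimum is reached where $x_\lambda=y_\lambda$.
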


\begin{proof}
Let $\{k_0, k_1\} \subseteq p(\mu)$, we are interested in computing the smallest
transportation cost from $\mu$ to $\lambda$ such that in $\lambda$ the point $k_1$ now
dominates $k_0$. Moving any other point than $k_0$ and $k_1$ in $\lambda$ doesn't
affect whether $k_1$ dominates $k_0$, hence this is superfluous and would only
cost us more, so we can restrict our analysis to $\lambda$ that only moves $k_0$ and
$k_1$. Now, let $\J$ be the set of dimensions alongside which $\mu^j_{k_0} \geq 
\mu^j_{k_1}$. Since our transportation cost is separable alongside each dimension,
then moving our points alongside any other axis that the ones in $\J$ would not
help create the domination and would bear some extraneous cost. As such we
can restrict ourselves to $\lambda$ that only move $k_0$ and $k_1$ alongside $\J$.
Using again that the transportation cost separability, we can split our analysis
along the different axis independently. Now, the cost of inverting $\mu^j_{k_0}$
and $\mu^j_{k_1}$ for $j \in \J$ is a known problem from best arm identification.
We compute the value here but it is possible to see \cite{garivier2016optimal}
for a reference. So our optimization problem boils down to
$$ \inf_{x_\lambda \leq  y_\lambda} w_x \frac{1}{2} (x_\mu-x_\lambda)^2
  + w_y \frac{1}{2} (y_\mu - y_\lambda)^2 \;.$$

The inf will be reached at a point where $x_\lambda = y_\lambda \triangleq s$. Taking the
derivative
in $s$ of the cost function, we get $w_x (s - x_\mu) + w_y (s - y_\mu)$ which is
null at $s^* \triangleq \frac{w_x x_\mu + w_y + y_\mu}{w_x + w_y}$ yielding the
following minimum transportation cost $\frac{1}{2} \frac{w_x w_y}{w_x+w_y} (x_\mu
- y_\mu)^2$. Now summing alongside the axis of $\J$ yields
$$ \frac{1}{2} \frac{w_{k_0} w_{k_1}}{w_{k_0} + w_{k_1}}
  \sum_{j \in [d]} (\mu^j_{k_0} - \mu^j_{k_1})_+^2 $$
where $(\cdot)_+^2: u\mapsto \left(\max\{0, u\}\right)^2$ stands for the squared
positive part.

Computing the cost of shadowing a point by another and conversely is then done
in $O(d)$ operations, which we need to do for each pair of points in the Pareto
set leading us to a computation cost of $O(p^2d)$ for removing a point. We
highlight the difference between $K$ and $p$ as for large values of $K$, $p$
might be significantly lower than $K$.
\end{proof}

We present in Appendix~\ref{app:speedup} a speed-up for the dimension two from
time $O(p^2)$ to time $O(p)$, and we discuss why this speed-up is not possible
in higher dimensions $d>2$.

\subsection{Adding a point to the Pareto set}
The cost of adding a point to the Pareto set doesn't have a closed expression as
was the case for removing a point. It is a more tedious procedure, which shows
in the final algorithmic complexity.

\begin{lemma}[Cost of adding a point to the Pareto set]\label{lem:adding}
  Consider the minimum transportation cost to add any point to the Pareto set
  \begin{equation}
    \begin{array}{cc}
      \min & D_w(\mu, \lambda) \;.\\
      \text{w.r.t.} & \lambda \in \bigcup_{k_0\notin p(\mu)} \Alt^{\add}(k_0)
    \end{array}
  \end{equation}
  The value and minimizer can be computed in time
  $$ O\left(\big(K(p+d) + d^3p\big) \binom{p + d - 1}{d - 1}\right)\,. $$
\end{lemma}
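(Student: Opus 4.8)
The plan is to turn the constrained minimisation over $\lambda \in \bigcup_{k_0 \notin p(\mu)} \Alt^{\add}(k_0)$ into a finite enumeration, much as in the proof of Lemma~\ref{lem:removing}; the new difficulty is that the optimal displacement of the added arm lives in $\R^d$ and couples all the domination constraints at once.

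\textbf{Step 1 (reduction to an unconstrained problem).} Fix $k_0 \notin p(\mu)$. As for removal, moving any arm other than $k_0$ and the arms of $p(\mu)$ is pure waste, so I restrict to those. Now fix a tentative position $\lambda_{k_0} = x \in \R^d$ for the added arm. Because $D_w$ is separable across arms and coordinates and the constraint $k_0 \npreceq_\lambda k$ involves arm $k$ alone, the cheapest feasible choice for each $k \in p(\mu)$ is to pick one witness coordinate $j$ and lower $\lambda^j_k$ down to $x^j$ (leaving the other coordinates untouched), at cost $\min_{j}\tfrac{w_k}{2}(\mu^j_k-x^j)_+^2$; this also covers arms $k$ already not dominated by $x$, for which the cost is $0$. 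Taking the closure of $\Alt^{\add}(k_0)$ (which leaves the infimum unchanged), the problem becomes the unconstrained minimisation over $x \in \R^d$ of $$ g(x) = \tfrac{w_{k_0}}{2}\lVert\mu_{k_0}-x\rVert^2 + \sum_{k\in p(\mu)}\min_{j\in[d]}\tfrac{w_k}{2}(\mu^j_k-x^j)_+^2 \;. $$ One checks that an optimal $x$ satisfies $x \succeq \mu_{k_0}$ and that only the at most $p$ arms of $p(\mu)$ that dominate $k_0$ contribute a positive term.

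\textbf{Step 2 (enumerate the witness assignment, exploit convexity).} The only non-convexity in $g$ is the inner minimum over witness coordinates. So I would introduce an assignment $f\colon p(\mu)\to[d]$ and minimise $g$ over the polyhedron on which $f(k)$ achieves the argmin for every $k$. There $g$ is convex and, crucially, \emph{separable across coordinates}: its minimum equals $\sum_{j}\min_{x^j}\!\bigl[\tfrac{w_{k_0}}{2}(\mu^j_{k_0}-x^j)^2+\sum_{k:f(k)=j}\tfrac{w_k}{2}(\mu^j_k-x^j)_+^2\bigr]$. Each one-dimensional subproblem is a convex water-filling: after sorting the $\mu^j_k$, the optimal $x^j$ is the $w$-weighted mean of $\mu^j_{k_0}$ and of those $\mu^j_k$ lying above it, obtained in linear time — the analogue of the elementary one-dimensional computation in the proof of Lemma~\ref{lem:removing}.

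\textbf{Step 3 (the combinatorial reduction — the main obstacle).} There are $d^p$ assignments, which yields only the naive $O(Kd^{p+1})$. The crux is to show that it is enough to enumerate the assignment \emph{profile} $(n_1,\dots,n_d)$ with $n_j=\lvert f^{-1}(j)\rvert$ and $\sum_j n_j=p$ — of which there are $\binom{p+d-1}{d-1}$ — because each profile admits a canonical optimal assignment. The argument I have in mind is a stationarity/exchange one: at an optimal $x$ every arm's witness attains $\arg\min_j(\mu^j_k-x^j)$, and comparing these conditions for two arms witnessed by different coordinates $j\ne j'$ forces $\mu^j_k-\mu^{j'}_k$ to be ordered consistently with the assignment; this, together with the profile, determines $f$, and $f$ and its water levels can then be produced by an iterative procedure that inserts the $p$ arms one by one, each insertion requiring an $O(d^3)$ linear-algebraic update of the water levels and a feasibility test. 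Looping over the $\binom{p+d-1}{d-1}$ profiles and, inside each, over the $\le K$ candidates $k_0$ and the $p$ arms and $d$ coordinates for the bookkeeping, water-fillings and cost evaluation, a careful accounting gives the stated $O\bigl((K(p+d)+d^3p)\binom{p+d-1}{d-1}\bigr)$ bound, after which the global minimiser $\lambda$ is reassembled from the $d$ one-dimensional optima. I expect the genuinely delicate part to be exactly this step — proving that restricting to profiles loses nothing, handling ties and the coordinates in which $k_0$ is not dominated, and establishing the $O(d^3p)$ per-profile cost of recovering the canonical assignment — whereas Steps 1–2 are routine once the separability is noticed; this is where the combinatorial and algorithmic innovations sit, and it is carried out in the appendix.
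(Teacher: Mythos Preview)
Your Steps~1 and~2 coincide with the paper's argument essentially verbatim, and the central combinatorial insight of Step~3 --- that at most one assignment $f$ per profile $(n_1,\dots,n_d)$ can be relevant, giving the $\binom{p+d-1}{d-1}$ count --- is exactly Lemma~\ref{lem:counting}, whose proof is the exchange/cycle argument you sketch. The one substantive difference is the enumeration procedure. You propose to loop over profiles and, for each, \emph{reconstruct} the canonical $f$ from the orderings of the $\mu_k^j-\mu_k^{j'}$; but for $d>2$ there are $\binom{d}{2}$ such orderings and it is not clear how to reconcile them into an assignment given only the sizes $n_j$ --- the paper's uniqueness proof is non-constructive in this sense. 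The paper sidesteps this entirely: it never touches profiles during the algorithm. Instead it performs a depth-first backtracking search over \emph{partial} assignments $\varphi\colon\{k_1,\dots,k_r\}\to[d]$, pruning as soon as the cell $S(\varphi)$ becomes empty; emptiness is decided not by ``linear-algebraic water-level updates'' but by negative-cycle detection in a $d$-node weighted digraph, with $O(d^2)$ incremental shortest-path updates per child (Lemma~\ref{lem:tree.enums}). The profile bound is then invoked only \emph{a posteriori}, to upper-bound the number of surviving leaves and hence the total traversal cost. This buys concreteness precisely at the step you flag as delicate: the tree search produces every valid $\varphi$ directly, without ever needing to invert the profile map.
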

To prove this lemma, we start by looking at how, given a target location $\lambda_0$
for a point $k_0 \notin p(\mu)$, points from the Pareto set should move to ensure
that they are no longer dominating $\lambda_0$. We then provide an algorithm to range
over all the different possible ways points from the Pareto set can move. Then,
given a way the points from the Pareto set move, we compute the minimal
transportation cost and the associated minimizer coherent with this way of
moving points from the Pareto set.

Let $k_0 \notin p(\mu)$. We want to find a $\lambda$ such that in $\lambda$, all points from
$p(\mu)$ are no longer dominating $k_0$ as outlined before. For ease of notation,
we use $0$ as a stand-in for $k_0$ in our subscripts.

We study for now a weaker version of our problem, see Figure~\ref{fig:skthree}.
Given a point $\lambda_0$ to which we will transport $\mu_0$, what is the minimal cost
to move the points from $p(\mu)$ to break their domination of $k_0$. This question
is independent for all points in the Pareto set as they are not interacting with
each others costs, so we can treat them one by one. For a given point $k$ of the
Pareto set, we are only interested in making one of its coordinates below the
corresponding entry of $\lambda_0$. Now, either moving $\mu_0$ to $\lambda_0$ already pushed
$\mu_k$ outside of the upper orthant of $k_0$ or we need to get it outside of the
way. In the first case, the transportation cost is zero, in the other one, as
our transportation cost is separable alongside every dimension, we only need to
find the dimension alongside which the transportation cost is the smallest i.e.
$\frac{w_k}{2} \min_j (\mu_k^j - \lambda_0^j)^2$. We can put the two expressions
together as $\frac{w_k}{2} \min_{j \in [d]} (\mu_k^j - \lambda_0^j)_+^2$ where $(\cdot)_+^2$
is the square of the positive part which is a convex, non-decreasing and
differentiable function. Putting all of the points back together, we get that
given $\lambda_0$, the minimal transportation cost to move everyone from $p(\mu)$
outside of its top right orthant is
$$ g_{k_0}(\lambda_0) \triangleq \frac{w_0}{2} ||\mu_0 - \lambda_0||^2
  + \sum_{k \in p(\mu)} \frac{w_k}{2} \min_{j \in [d]} (\mu_k^j - \lambda_0^j)_+^2. $$

\begin{figure}[ht]
  \vspace{.3in}
  \centerline{\includegraphics[width=.7\linewidth]{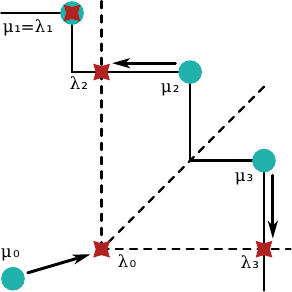}}
  \vspace{.3in}
  \caption{
    The original model is drawn in turquoise (circle). We start by moving the
    point $0$ to a new location. Then we move the points that are still in its
    all-positive orthant outside of it with respect to the dimension where the
    move is the shortest (brown stars).
  }
  \label{fig:skthree}
\end{figure}

Finding the smallest transportation cost is then equivalent to finding the
minimal value of $g$ which we now set out to do. First, it is possible to move
all of the $\min_j$ out to the start of the function giving us $g_{k_0}(\lambda_0) =
\displaystyle{\min_{\varphi: p(\mu) \to [d]} g_{k_0,\varphi}(\lambda_0)}$ where $g_{k_0,\varphi}$ has the
same expression as $g_{k_0}$ but with $\min_j \mu_k^j - \lambda_0^j$ replaced by
$\mu_k^{\varphi(k)} - \lambda_0^{\varphi(k)}$. Hence $\inf_{\lambda_0\in\R^d} g_{k_0}(\lambda_0) = \min_\varphi
\inf_{\lambda\in\R^d} g_{k_0,\varphi}(\lambda_0)$. The $g_{k_0,\varphi}$ are differentiable strictly
convex functions, which makes them quite easy to minimize.

The idea is now to range over all different $\varphi : p(\mu) \to [d]$, to compute the
minimizer of $g_{k_0,\varphi}$ and to take the minimum over all of those. However
following this procedure would lead to a computation cost in $\Omega(d^p)$ which is
exponential in the number of arms in the Pareto set and thus would for most use
case represent to high of a computation cost to reasonably use the algorithm.
Our main insight and the key to tractability is that it is not necessary to
consider all elements in $[d]^{p(\mu)}$. Instead, it turns out that we need only
look at a subset (which depends on the bandit $\mu$) of size $O(p^{d-1})$
polynomial in the number of arms. To understand why, let $\lambda_0$ and let $\varphi: p(\mu)
\to [d]$, the mapping from $k \in p(\mu)$ to $\arg \min_j \mu_k^j - \lambda_0^j$. The set
$S(\varphi)$ of $\lambda_0$ that yield the same $\varphi$ map is given by the following linear
system
$$ \forall k \in p(\mu), \forall j \in [d],
  \quad \mu_k^{\varphi(k)} - \lambda_0^{\varphi(k)} \leq  \mu_k^j - \lambda_0^j\;. $$
For all $k_0 \notin p(\mu), \lambda_0 \in S(\varphi)$, we know that $\forall \varphi':[p]\to[d],\,
g_{k_0,\varphi}(\lambda_0) \leq  g_{k_0,\varphi'}(\lambda_0)$. Given a $\varphi$, $S(\varphi)$ is called the cell
associated with $\varphi$. However, while for any point $\lambda_0$ there is a cell $S(\varphi)$
that contains it, the converse is not the case and a cell associated with a $\varphi$
might well be empty. A cell is empty if
$$  \forall \lambda_0,\, \exists \varphi':\, g_{k_0,\varphi'}(\lambda_0) < g_{k_0,\varphi}(\lambda_0) \;. $$
A $\varphi$ map associated with a non-empty cell is called valid. Ranging over the $\varphi$
with an empty cell is useless, thus, we can restrict our $\min_\varphi$ to valid $\varphi$.

We highlight the fact that while $S(\varphi)$ is the subset of $\R^d$ is the set of
points where $g_{k_0,\varphi}$ is lower than all other $g_{k_0,\varphi'}$, the minimizer in
$\lambda_0$ of $g_{k_0,\varphi}$ might not leave with $S(\varphi)$. However, this is not a cue to
consider the constrained problem where $\lambda_0$ is restricted to live in $S(\varphi)$ as
studying the unconstrained problem would still yield the correct overall
minimizer.

Also, note that $S(\varphi)$ doesn't depend on $k_0$ neither through $\mu_{k_0}$ or
$w_{k_0}$ but only on $\lambda_0$ as such a cell being empty or not doesn't depend on
the point that we might be currently considering. Hence, we can start by
enumerating all non-empty cells and then for each of them we compute the
minimizer of $g_{k_0, \varphi}$ for every different $k_0 \in p(\mu)$, which avoids us
enumerating $K$ different times the non-empty cells. We thus introduce
$$ g_\varphi = \min_{k_0 \notin p(\mu)} g_{k_0, \varphi} ~\text{and}~
  g = \min_{\varphi: S(\varphi) \neq \O} g_\varphi $$
and we get that
$$ \inf_{\lambda_0 \in \R^d} g(\lambda_0) = \min_{\varphi: S(\varphi) \neq \O} \min_{k_0 \notin p(\mu)}
  \inf_{\lambda_0\in\R^d} g_{k_0, \varphi} (\lambda_0) \;. $$

In the next section, we give an algorithm to find non-empty cells and we provide
an analysis on the number of them and the algorithmic complexity of our
algorithm to range over non-empty cells.

\subsubsection{Constructing cells}
Using the observations from the previous section, to know if a cell is empty or
not, we could just range over all possible $\varphi: p(\mu) \to d$ and when one of them
has a non-empty $S(\varphi)$ we compute the minimizer of $g_\varphi$. Let $U\subseteq
V\subseteq p(\mu)$ and $\varphi^r:U \to [d]$ and $\varphi:V\to [d]$. First, we provide a new
altered definition for $S(\varphi)$ which is still compatible with the first one, but
which now works with maps with a restricted domain:
\begin{align*}
  S(\varphi) = \left\{ \lambda_0 \in \R^d \middle|
  \begin{aligned}\forall k \in \dom&(\varphi), \forall j\in [d], \\
    &\mu_k^{\varphi(k)} - \lambda_0^{\varphi(k)} \leq  \mu_k^j - \lambda_0^j
  \end{aligned}\right\}.
\end{align*}
We know assume that $\left.\varphi\right|_U = \varphi^r$, we have that $S(\varphi) \subseteq
S(\varphi^r)$ as we only further constrain the set of equations that defines $S(\varphi^r)$
to construct $S(\varphi)$. This leads to some important results for us as if $\varphi^r$ is
not valid then so is $\varphi$. This prompts us to think of the different $\varphi$ as
leaves of tree for which internal nodes are restricted $\varphi$ maps.

More formally, given an order on the points from the Pareto set $\{k_i | i \in
[p]\} = p(\mu)$, the root of our tree is the empty map, $\varphi: \O \to [d]$. It has
$d$ possible children $\varphi: k_1 \in \{k_1\} \mapsto j$ for all $j\in[d]$. For any
of these children $\varphi_1 \in [d]^{\{k_1\}}$, they themselves have $d$ possible
children given by $\left.\varphi_2\right|_{\{k_1\}} = \varphi_1$ and the different possible
value for $\varphi_2(k_2)$. We continue this process until all the points are
exhausted and the leafs of this tree are the element of $[d]^{[p]}$.

Given this construction and the observation made previously that $S$ is
decreasing along the branches of this tree, we propose a recursive backtracking
algorithm to enumerate non-empty cells. We start from the root and operate a
depth-first search algorithm. When visiting any internal node, we check whether
$S(\varphi)$ is empty or not. If it is not empty, we go on with our search until we
hit a leaf yielding a non-empty $\varphi$ map. Otherwise, we know that any leaves
below it will be associated with an empty cell, as such there is no use in
visiting this subtree at all and we can backtrack one level up and continue our
search in a different subtree.

\begin{figure}[ht]
  \vspace{.3in}
  \begin{center}
    \begin{subfigure}[t]{0.4\linewidth}
      \centerline{\includegraphics[width=\linewidth]{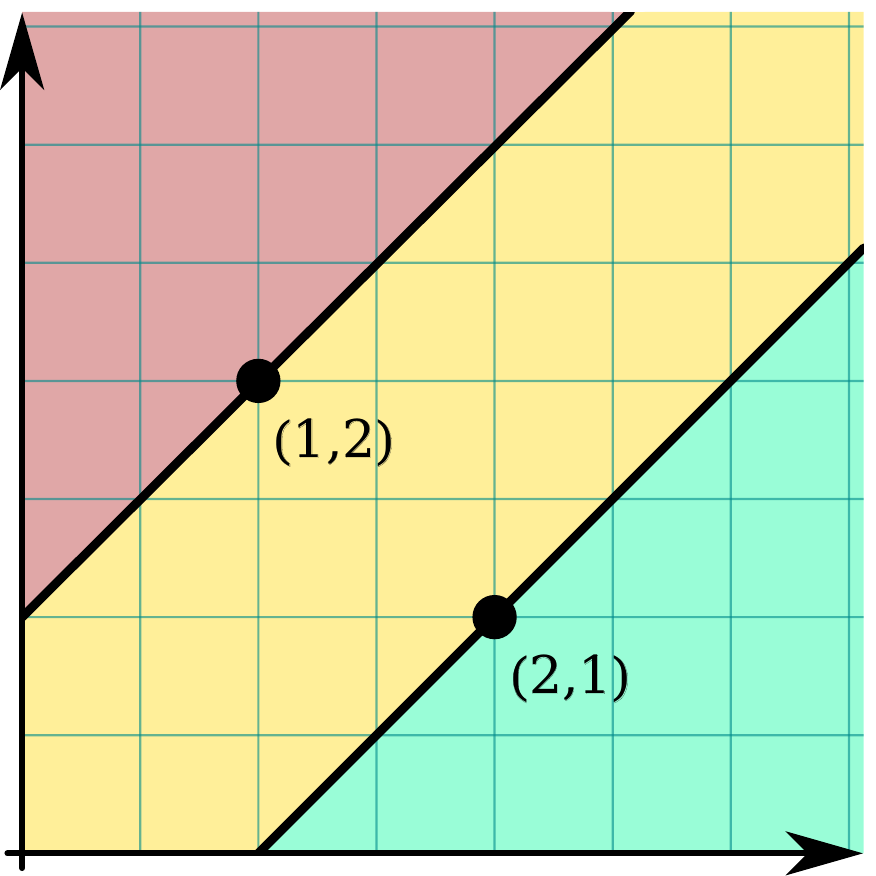}}
    \end{subfigure}~
    \begin{subfigure}[t]{0.528\linewidth}
      \centerline{\includegraphics[width=\linewidth]{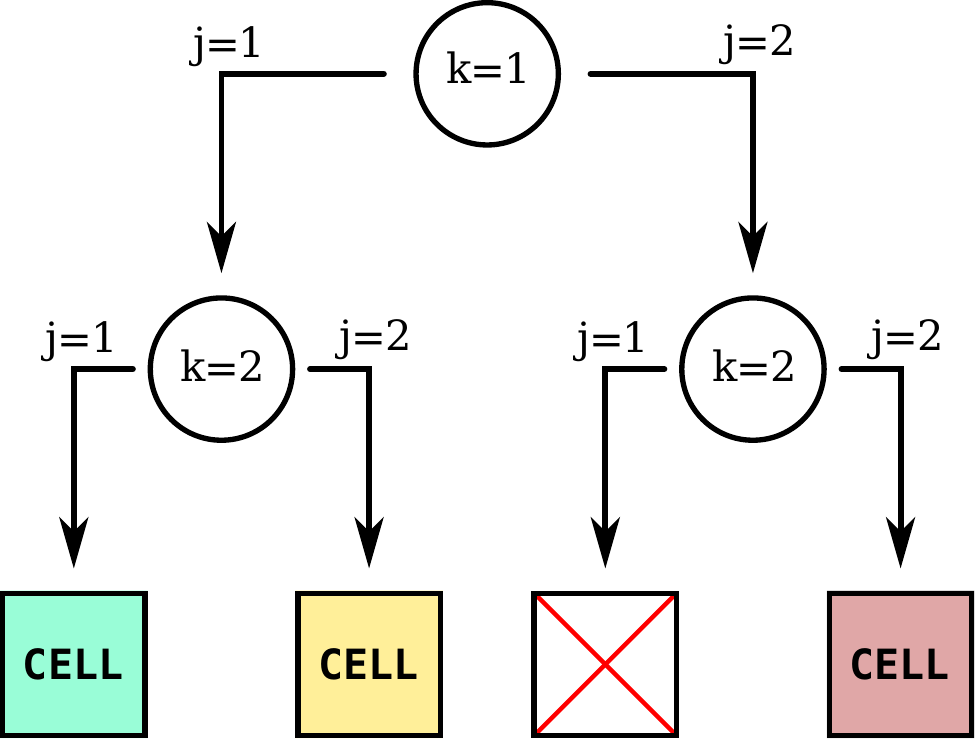}}
    \end{subfigure}
  \end{center}
  \vspace{.3in}
  \caption{
    An example of cell construction in 2d with three valid cells and an empty
    one
  }
  \label{fig:2dexample}
\end{figure}

\begin{example}
Let $\mu_1 = (1,2)$ and $\mu_2 = (2,1)$. The root of our tree is the point $\mu_1$ and
we try to move it alongside dimension $1$ by walking the leftmost edge from the
root. There, we check that our linear system still admits solutions
$$ \mu_1^1 - \lambda_0^1 \leq  \mu_1^2 - \lambda_0^2 ~\Rightarrow~
  \lambda_0^2 - \lambda_0^1 \leq  \mu_1^2 - \mu_1^1 = 1 $$
  which it does so we can go on. Now, we add the second point alongside the
first coordinate which adds $\lambda_0^2 - \lambda_0^1 \leq  \mu_2^2 - \mu_2^1 = -1$ to our set of
equations, which still admits a solution. Since this was the last point, this
means that $\varphi: 1;2 \mapsto 1$ was a non-empty cell and we can go on. We now
backtrack to try and add the second point alongside the second coordinate. This
adds $\lambda_0^1 - \lambda_0^2 \leq  \mu_2^1 - \mu_2^2 = 1$ to our set of linear equations. This
gives us $-1 \leq  \lambda_0^2 - \lambda_0^1 \leq  1$ which admits a solution and thus we found a
new non-empty cell. We now backtrack all the way to our root resetting our list
of inequalities and we add the first point alongside the second coordinate
($\lambda_0^1 - \lambda_0^2 \leq  \mu_1^1 - \mu_1^2 = -1$). But then when we want to add the second
point alongside the first coordinate, we end up with the infeasible system
$\lambda_0^1 - \lambda_0^2 \leq  -1 \wedge \lambda_0^1 - \lambda_0^2 \geq  1$. We can thus discard that sub-tree
and backtrack one step. Here the tree has a small depth meaning we are not
discarding much, but for a bigger tree it could lead to removing a lot of
possible empty cells from our exploration. After that, we continue try to add
the second point alongside its second coordinate which leads to a new cell. So
in this example, there are four possible $\varphi$ of which three are a non-empty
cell.
\end{example}

Moreover, if we consider the $d$ possible children of a node $\varphi^r$ (we assume
that the mapping of $k \in p(\mu)$ is decided at this point in the tree, and we
label $\varphi_j$ the children), the cell of each $\varphi_j$ can be obtained by adding
constraints to $\varphi^r$. If we denote by
$$ C_j \triangleq \left\{\lambda \in \R^d\,\middle|\,\forall j' \in [d],\,
  \mu_k^j - \lambda^j \leq  \mu_k^{j'} - \lambda^{j'}\right\}\;, $$
then $S(\varphi_j) = S(\varphi^r) \cap C_j$. But, since the $C_j$ provide a tesselation of
$\R^d$ (a tesselation being a set of closed sets whose union is $\R^d$ and whose
interiors are disjoints; it is a partition of the space up to the boundaries of
the parts), the $(S(\varphi_j))_{j \in [d]}$ are themselves a tesselation of $S(\varphi^r)$.
When iterating this result, we get that, first a valid internal node will have
a valid internal child and thus a valid internal leaf within its subtree, and
that all (valid) leafs provide a tesselation of $\R^d$ (this is true for any
maximal anti-chain of nodes within the tree) since the root has a cell spanning
over all of $\R^d$. This will be useful later for bounding the number of valid
nodes within the intersection.

The following lemma is proved in Appendix~\ref{app:fast.tree.enums}.
\begin{lemma}\label{lem:tree.enums}
  Checking whether any node $\varphi^r \in [d]^r$ is valid, i.e.\ whether its cell
  $S(\varphi^r)$ is non-empty, can be done in time $O(r + d^3)$. Moreover, by sharing
  computation we can check each of the $d$ extensions to $\varphi^{r+1}$ in time
  $O(d^2)$ each.
\end{lemma}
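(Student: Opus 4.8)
The plan is to observe that the cell $S(\varphi^r)$ is exactly the feasible region of a \emph{system of difference constraints} in the $d$ unknowns $\lambda_0^1,\dots,\lambda_0^d$. Rewriting each defining inequality $\mu_k^{\varphi(k)}-\lambda_0^{\varphi(k)}\le\mu_k^j-\lambda_0^j$ (for $k\in\dom\varphi^r$ and $j\in[d]$) as
$$\lambda_0^j-\lambda_0^{\varphi(k)}\ \le\ \mu_k^j-\mu_k^{\varphi(k)}\,,$$
it is in the canonical form $x_v-x_u\le c$. By the classical characterization of such systems, $S(\varphi^r)\neq\varnothing$ if and only if the constraint digraph — one vertex per coordinate $a\in[d]$, and for each $k\in\dom\varphi^r$ and $j\in[d]$ an arc $\varphi(k)\to j$ of weight $\mu_k^j-\mu_k^{\varphi(k)}$ — has no negative-weight cycle.

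For the first claim I would build a reduced version of this digraph: a single pass over $\varphi^r$ buckets the $r$ arms by the coordinate $\varphi(k)\in[d]$, and for each ordered pair $(a,b)$ we keep only the lightest arc, of weight $\min\{\mu_k^b-\mu_k^a : \varphi(k)=a\}$; this leaves a digraph on $d$ vertices with $O(d^2)$ arcs, assembled in time $O(r+d^2)$. Running any negative-cycle detector on it — Bellman--Ford from an added zero-weight super-source, or Floyd--Warshall — costs $O(d^3)$, for the announced total $O(r+d^3)$.

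For the second claim I would, during the depth-first search, carry with each node $\varphi^r$ the all-pairs shortest-path matrix $D$ of its (reduced) constraint digraph, which is finite precisely when $\varphi^r$ is valid. Passing from $\varphi^r$ to a child $\varphi^{r+1}$ introduces a single new arm $k_{r+1}$ mapped to some coordinate $j$, i.e.\ a whole \emph{star} of arcs $j\to j'$ ($j'\in[d]$) of weights $c_{j'}=\mu_{k_{r+1}}^{j'}-\mu_{k_{r+1}}^{j}$. Since all new arcs share the tail $j$ and $\varphi^r$ is valid (no pre-existing negative cycle), a walk in the augmented graph never benefits from traversing the new star more than once; hence a new negative cycle exists iff $\min_{j'}\bigl(c_{j'}+D[j'][j]\bigr)<0$, and if none does the updated matrix is
$$D'[u][v]\ =\ \min\Bigl(D[u][v],\ D[u][j]+\min_{j'}\bigl(c_{j'}+D[j'][v]\bigr)\Bigr)\,.$$
Both the inner vector $v\mapsto\min_{j'}(c_{j'}+D[j'][v])$ and the outer update cost $O(d^2)$, and the $d$ siblings all reuse the same parent matrix $D$, so each extension is $O(d^2)$.

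The step I expect to be the crux is justifying this $O(d^2)$ update rigorously: because we add $d$ arcs at once rather than one, the textbook single-edge shortest-path update does not literally apply, and one has to prove the walk-decomposition claim — every $u\to v$ walk in the augmented digraph can be rerouted, without increasing weight, into one that enters the new star at most once — which is exactly where the common-tail structure and the absence of a negative cycle before insertion are used. A minor point worth stating explicitly is that this bookkeeping tests only non-emptiness of $S(\varphi^r)$; the later unconstrained minimization of $g_{k_0,\varphi}$ over $\lambda_0\in\R^d$ is a separate computation and is not what Lemma~\ref{lem:tree.enums} concerns.
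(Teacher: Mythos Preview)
Your proposal is correct and follows essentially the same route as the paper: recast cell non-emptiness as negative-cycle detection in a difference-constraint digraph on $[d]$, then maintain the all-pairs shortest-path matrix and exploit the common-tail star structure of the newly added arcs to get the $O(d^2)$ incremental update, with the negative-cycle test $\min_{j'}(c_{j'}+D[j'][j])<0$ and the matrix refresh matching the paper's Lemma in Appendix~\ref{app:fast.tree.enums} line for line. One minor quibble: assembling the reduced digraph costs $O(rd)$ rather than the $O(r+d^2)$ you claim, since computing $\min\{\mu_k^b-\mu_k^a:\varphi(k)=a\}$ for every destination $b$ requires a pass over each bucket per $b$; this slight looseness is shared with the paper's own statement of the first bound.
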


In Appendix~\ref{app:counting} we prove the following upper bound for the number
of non-empty cells:
\begin{lemma}\label{lem:counting}
  The number of cells is bounded by $\binom{p+d-1}{d-1}$ where $p$ is the number
  of points from the Pareto set.
\end{lemma}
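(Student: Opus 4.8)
The plan is to exhibit an injection from the set of non-empty cells into the set of \emph{type vectors}
$$\mathcal{N} \triangleq \{(n_1,\dots,n_d) : n_j\ge 0 \text{ integers},\ n_1+\dots+n_d = p\},$$
whose cardinality is $\binom{p+d-1}{d-1}$ by the stars-and-bars identity ($p$ stars and $d-1$ bars). To a cell $S(\varphi)$ with $\varphi\colon p(\mu)\to[d]$ I associate its type $N(\varphi) \triangleq \big(|\varphi^{-1}(1)|,\dots,|\varphi^{-1}(d)|\big)\in\mathcal{N}$; its entries always sum to $p=|p(\mu)|$. The whole content is then to show that distinct non-empty cells have distinct types.

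For this, recall that $\lambda_0\in S(\varphi)$ means exactly $\mu_k^{\varphi(k)}-\lambda_0^{\varphi(k)}\le \mu_k^{j}-\lambda_0^{j}$ for all $k\in p(\mu)$ and all $j\in[d]$. Let $S(\varphi),S(\varphi')$ be non-empty, fix $\lambda_0\in S(\varphi)$, and instantiate the $k$-th inequality at $j=\varphi'(k)$; summing over $k\in p(\mu)$ yields $\sum_k \mu_k^{\varphi(k)} - \sum_k \lambda_0^{\varphi(k)} \le \sum_k \mu_k^{\varphi'(k)} - \sum_k \lambda_0^{\varphi'(k)}$. Since $\sum_k \lambda_0^{\varphi(k)} = \sum_{j\in[d]} N(\varphi)_j\,\lambda_0^{j}$ depends on $\varphi$ only through its type, if $N(\varphi)=N(\varphi')$ the $\lambda_0$-terms cancel and I get $\sum_k \mu_k^{\varphi(k)} \le \sum_k \mu_k^{\varphi'(k)}$. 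Running the same argument from some $\lambda_0'\in S(\varphi')$ gives the reverse inequality, hence $\sum_k \mu_k^{\varphi(k)} = \sum_k \mu_k^{\varphi'(k)}$; it remains to conclude $\varphi=\varphi'$, which makes $N$ injective and gives the bound.

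Closing that last gap is the main obstacle, and the one delicate point is the treatment of ties in the $\mu_k^j$. When the $\mu$ are in general position — in particular almost surely, since they are empirical means of continuous distributions — the finitely many values $\sum_k \mu_k^{\psi(k)}$, $\psi\colon p(\mu)\to[d]$, are pairwise distinct, so the equality above forces $\varphi=\varphi'$ and we are done. To dispense with genericity I would instead identify the non-empty full-dimensional cells with the regions of linearity of the concave piecewise-affine map $\lambda_0\mapsto \sum_{k\in p(\mu)}\min_{j\in[d]}(\mu_k^{j}-\lambda_0^{j})$, whose gradient on $S(\varphi)$ is $-\sum_k e_{\varphi(k)}$ (with $e_j$ the $j$-th standard basis vector), i.e. $-N(\varphi)$; the standard fact that two linear pieces of a concave function sharing a gradient share their affine extension and hence lie in a single maximal linearity region yields injectivity of $N$ on these cells directly, while lower-dimensional non-empty cells occur only at ties and sit inside the boundaries of full-dimensional ones, contributing no new type. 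Everything else — the tessellation of $\R^d$ by the cells (already noted in the text), the gradient computation, and the count $|\mathcal N| = \binom{p+d-1}{d-1}$ — is routine.
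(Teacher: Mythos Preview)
Your core argument is correct and takes a genuinely different, shorter route than the paper's. Both proofs associate to each valid $\varphi$ its type vector $N(\varphi)=(|\varphi^{-1}(j)|)_j$ and argue that two valid maps with the same type must coincide, whence the bound $|\mathcal N|=\binom{p+d-1}{d-1}$. The paper does this through the graph machinery of Appendix~\ref{app:fast.tree.enums}: given $\varphi,\varphi'$ with equal type it builds a permutation $\pi$ with $\varphi'=\varphi\circ\pi$, reduces $\pi$ to cycles with pairwise distinct $\varphi$-values, and observes that each such cycle is a non-negative cycle in the constraint graph $G$ for $\varphi$, hence its reversal is a non-positive cycle in $G'$ for $\varphi'$, forcing infeasibility of $\varphi'$. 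You bypass all of that: summing the defining inequalities of $S(\varphi)$ at $j=\varphi'(k)$ and using that $\sum_k\lambda_0^{\varphi(k)}=\sum_j N(\varphi)_j\lambda_0^j$ depends only on the type gives $\sum_k\mu_k^{\varphi(k)}\le\sum_k\mu_k^{\varphi'(k)}$ in one line, and symmetry yields equality. This is the same cancellation the paper's cycle argument exploits, just seen globally rather than cycle-by-cycle, and it is noticeably cleaner.

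One caveat: your final step --- deducing $\varphi=\varphi'$ from $\sum_k\mu_k^{\varphi(k)}=\sum_k\mu_k^{\varphi'(k)}$ --- genuinely needs general position, and your concave-function patch for ties is not quite complete. The gradient argument does show $N$ is injective on \emph{full-dimensional} linearity regions, but the claim that lower-dimensional non-empty $S(\varphi)$ ``contribute no new type'' fails: with $d=2$, $p=2$ and $\mu_1^2-\mu_1^1=\mu_2^2-\mu_2^1$, the maps $\varphi=(1,2)$ and $\varphi'=(2,1)$ are both valid (each $S(\cdot)$ is the same hyperplane), both have type $(1,1)$, and no full-dimensional cell carries that type; here there are four valid $\varphi$'s against a bound of $\binom{3}{1}=3$. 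The paper's proof has the identical gap --- ``not negative'' in $G$ only yields ``non-positive'' in $G'$, not strictly negative --- so both arguments are really about generic $\mu$, which as you say holds almost surely for empirical means.
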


\subsubsection{Finding the optimum within a cell}
We now assume that we reached a leaf of our tree and thus found a valid $\varphi$ map
and we set out to minimize $g_{k_0,\varphi}$ for each different $k_0$. This means that
we fixed the direction in which each point from the Pareto set will move and
given this, we want to find the smallest transportation cost to add the point
$k_0$ to the Pareto set. We recall that $g_{k_0,\varphi}$ has the following
expression:
$$ g_{k_0, \varphi}(\lambda_0) \triangleq
  \frac{w_0}{2} ||\mu_0 - \lambda_0||^2
  + \sum_{k \in p(\mu)} \frac{w_k}{2} \left(\mu_k^{\varphi(k)} - \lambda_0^{\varphi(k)}\right)^2_+. $$

Since the $(\varphi^{-1}(j))_{j\in[d]}$ partitions $[K]$, this function can be
rewritten as a sum of $d$ different function $(h_j)_{j\in[d]}$ such that $h_j$
only depends on $\lambda_0^j$.
$$ h_j : \lambda_0^j \in \R \mapsto \frac{w_0}{2} (\mu_0^j - \lambda_0^j)^2
  + \sum_{k \in \cal \varphi^{-1}(j)} \frac{w_k}{2} (\mu_k^j - \lambda_0^j)^2_+. $$

Thus minimizing each $h_j$ independently is equivalent to minimizing $g_{k_0,
\varphi}$. Moreover, each $h_j$ is a strongly differentiable function thus it is
minimized at $\lambda_0^{*j}$ which is such that $h_j'(\lambda_0^{*j}) = 0$. For the rest of
this section, we will assume that $\varphi^{-1}(j) = [p_j]$ and that $\mu_1^j \leq  \cdots \leq 
\mu_{p_j}^j$. We label for $k \in [p_j],\,x_k = \mu_k^j$ and $x_0 = -\infty,
x_{p_j+1}=+\infty$. For $k\in \{0,\cdots, p_j+1\}$, for $x \in [x_k, x_{k+1}]$,
$$ h'_j (x) = w_0 (x - \mu_0^j) + \sum_{i = k+1}^{p_j} w_i (x - \mu_i^j) \;.$$
As the function admits a unique minimizer, there is only one $k \in \{0,\cdots,
p_j\}$ such that
$$ x_k
  \leq  \frac{w_0 \mu_0^j + \sum_{i = k+1}^{p_j} w_i \mu_i^j}
    {w_0 + \sum_{i=k+1}^{p_j} w_i}
  \leq  x_{k+1} \;.$$
The function $h_j$ is minimized at this point and finding this $k$ is done by
dynamic programming with $O(p_j)$ operations. This is faster than trying a
binary search approach as this would require $O(p_j \log p_j)$ operations. As we
need to compute the minimizer of $h_j$ for each $j$, the computational
complexity to add a point to the Pareto set is of order $O(p+d)$. This needs to
be done for all non-Pareto optimal points, for a total of $O(K(p+d))$
operations.

We now come back to the assumption we made that the $\mu_k^j$ are sorted and
filtered with respect to $\varphi^{-1}(j)$. Doing this for each $k_0$ within each cell
would incur a multiplicative cost of $O(p+p_j\log p_j )$. However sorting can be
performed prior to building our tree by sorting $(\mu_k^j)_{k\in p(\mu)}$ for each
$j$. This has a cost of $O(d p \log p)$ which is negligible when compared to the
rest of our algorithm. Within a cell we can then filter the sorted array to only
get $(\mu_k^j)_{k\in \varphi^{-1}(j)}$ sorted in $O(p)$. As this sequence is common for
all points $k_0$ that we might want to add in a cell this doesn't incur a cost
every time we would like to add a point to the Pareto front, but just once per
cell. This means that for one cell, the complexity to compute the cost to add
each point to the Pareto set is $O(pd + K(p+d))$ which is just $O(K (p+d))$.
Pulling from Lemma~\ref{lem:counting} our upper bound on the number of cells, we
get that this operation is done at most $\binom{p+d-1}{d-1}$ times, and building
the tree yields a final cost of
$$ O\left(\left(K(p+d) + pd^3\right) \binom{p+d-1}{d-1} \right) $$
as we had set out to prove in Theorem~\ref{thm:cplx}.

As most of the settings we aim to tackle can have a large number of arms but
only a few dimensions, this algorithmic complexity boils down to $O(K d^3 p^d)$.

We present in Appendix~\ref{app:speedup} a speed-up for the dimension two from
time $O(Kp^2)$ to time $O(Kp+p\log p)$.

\section{EXPERIMENTS}
We check the performance of our algorithm against the real-world scenario
proposed by~\cite{kone2023adaptive}. We revisit one of their experiment which is
based on the study by~\cite{Munro2021} about immunogenicity of a Covid vaccine
third dose (see the reference for details on the dataset). The setting is a
bandit model $\nu$ of $K=20$ Gaussian arms in dimension $d=3$ representing three
different immunogenicity responses. There are $p(\nu) = 2$ Pareto optimal arms
that we need to identify. The means of the arms and the variance of each
immunogenicity trait can be found in Appendix~\ref{app:experiment}.

For this instance, the instance-dependent factor in the sample complexity lower
bound is $T^*(\nu) = 2103.78$. The associated optimal weights are included in
Appendix~\ref{app:experiment}.

We use a risk of $\delta = 0.1$ and tested the sample complexity average over 2000
runs of the algorithm. Our average sample complexity is 17909. This is
significantly lower than the 39000 sample complexity from 0-APE-$20$ which is
the one corresponding to our setting. We use the stylized exploration rate
$\beta(t,\delta) = \ln\left(\frac{\ln(1+t)}{\delta}\right)$ as our threshold for the stopping
statistic. This is standard in experiments, and though less rigorous than the
choice made in \cite{kone2023adaptive}, it is still overly prudent: though the
risk parameter $\delta$ is set to $0.1$, the real risk is much smaller as we
obtained no identification mistake over the 2000 runs. This conservative
behaviour of pure exploration algorithms has been frequently reported. To speed
up the running time of our algorithm, we throttle the number of minimum
transportation cost computations. While we perform a gradient ascent step for
every sample, we only update the gradient every ten samples and we check the
stopping statistic every 25 samples. The experiment took 5 hours to run on 16
(of 20) cores of a dual Intel(R) Xeon(R) CPU E5-2630 v4 machine.

In Figure~\ref{fig:graph}, we show the empirical distribution of the stopping
time of our algorithm. We highlight that there is a significative difference
between $\E_\theta(\tau_\theta)$ and $\log\left(\frac{1}{\delta}\right)T^*(\theta)$, whereas we could
have expected them to be closer to each other. Several factors contribute to
this gap. We chose a too large value of $\delta$ to exhibit the expected asymptotic
behaviour. The analysis shows the presence of a second-order term that is not
neglictible on such experiments. It is all the more significant that we chose
not to update the gradient at every round but only every few rounds.

\begin{figure}[ht]
  \vspace{.3in}
  \centerline{\includegraphics[width=1.15\linewidth]{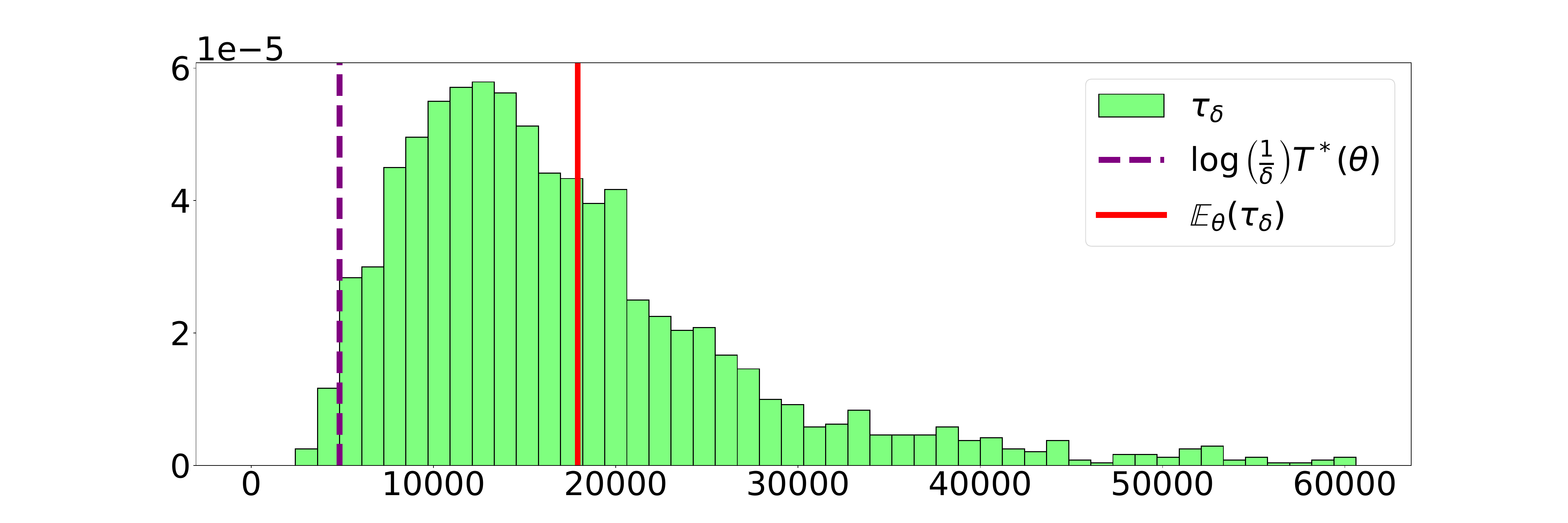}}
  \caption{
    Empirical distribution of the number of samples used to identify the two
    Pareto optimal points
  }
  \vspace{.3in}
  \label{fig:graph}
\end{figure}

This experiment is representative of a real edge of Track-and-Stop in terms of
sample complexity. This advantage can be theoretically understood by comparing
the complexity bounds when $\delta$ goes to $0$. For example, in the scenario
when one arm dominates a large number of other identical arms, Track-and-Stop
can be proved to be more data efficient by a factor almost~2.

We also ran experiments on random instances to evaluate the computation time of
the minimization solver and its dependency on $d$ and $p$. The random instances
on which we ran our algorithm consisted of $p$ points sampled from the
all-positive quadrant of a $d$ dimension sphere (as such they are all easily
part of the Pareto set) and an additional point at $0$. We only ran the
minimzation solver of our algorithm on these points and estimated the time it
took for each pair of $(p,d)$ on 100 samples. The result of this experiment is
represented in Figure~\ref{fig:complexityexp}. In the Figure, we can see that
given a fixed $d$ there is a linear dependency between the log of the time taken
and the log of the number of point on the Pareto set. The slope of each line is
proportional to $d$. This matches the theorical result obtained in
Theorem~\ref{thm:cplx} as for a given $d$ and $K=p+1$ the complexity of our
solver should be $O(p^{d+1})$.

\begin{figure}[ht]
  \vspace{.3in}
  \centerline{\includegraphics[width=\linewidth]{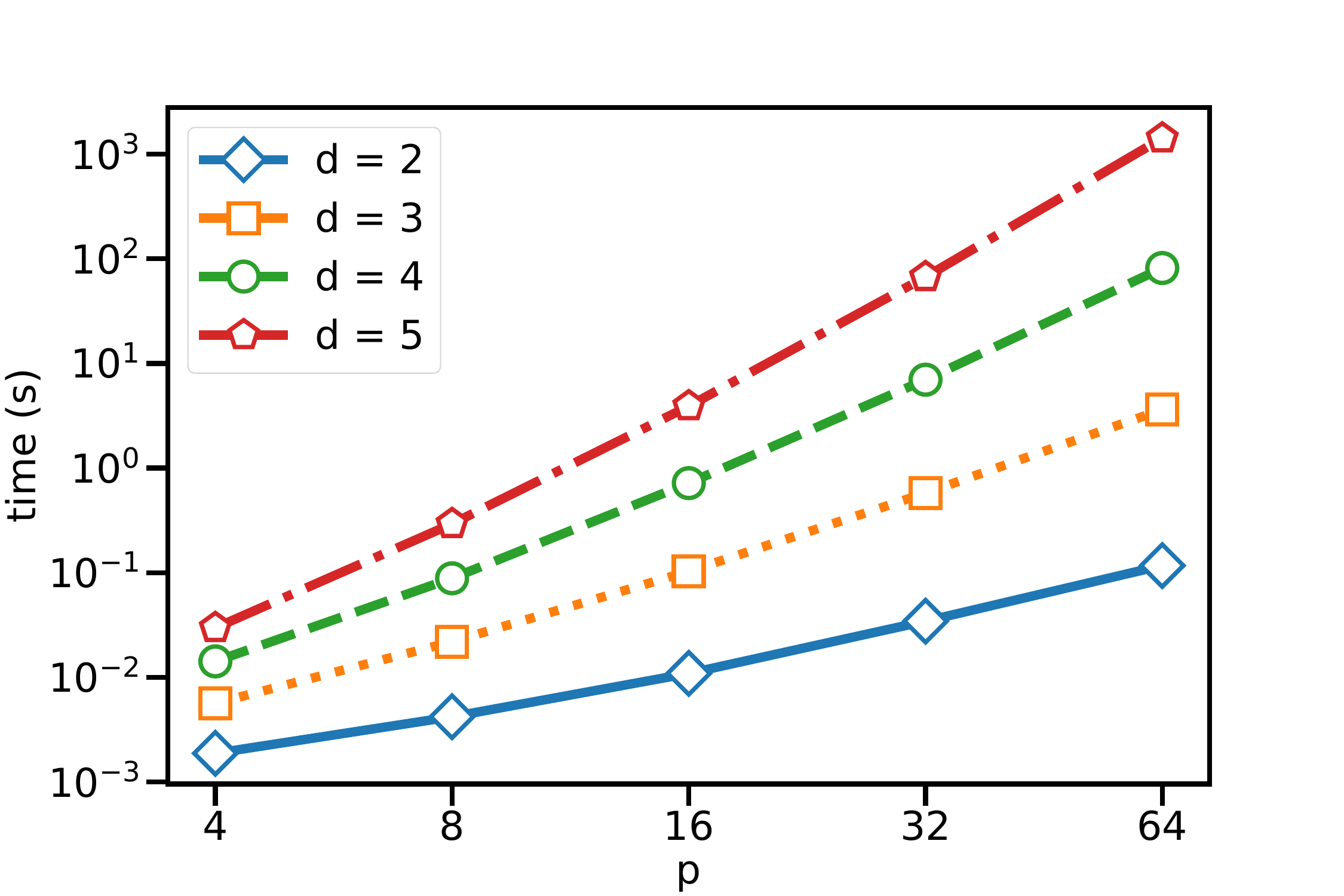}}
  \vspace{.3in}
  \caption{
    Time to solve the minimization problem on a random point cloud with $p$
    Pareto points in dim. $d$
  }
  \label{fig:complexityexp}
\end{figure}

\section{CONCLUSION}
We tackled the problem of Pareto front identification in a Gaussian multi-armed
bandit. To this end, we studied efficient implementation of the core oracle
required by the Track-and-Stop framework, namely the gradient of the
information-theoretic lower bound. To solve the associated non-convex
optimization problem, we split the domain in convex parts, discussed enumerating
the parts and solving the convex problem on the parts in closed form.

For future work we are interested in relaxing the assumptions. In particular, we
aim to study the problem under dependent coordinates, with Gaussians of unknown
variances, in other exponential families, in non-parametric classes, and in the
approximate $\epsilon > 0$ case. It would be interesting, challenging and
rewarding to pin down the computational complexity of the transportation problem
\eqref{eq:arginf}, already in the spherical Gaussian case. Can one find and
exploit additional structure in the problem to solve it in time at most a fixed
and dimension independent degree polynomial in the number of arms $K$? Or can
one prove a lower bound matching Theorem~\ref{thm:cplx}?

\bibliography{biblio.bib}

\clearpage
\section*{Checklist}
\begin{enumerate}
  \item For all models and algorithms presented, check if you include:
  \begin{enumerate}
    \item A clear description of the mathematical setting, assumptions,
      algorithm, and/or model. Yes
    \item An analysis of the properties and complexity (time, space, sample
      size) of any algorithm. Yes
    \item (Optional) Anonymized source code, with specification of all
      dependencies, including external libraries. Yes
   \end{enumerate}

  \item For any theoretical claim, check if you include:
  \begin{enumerate}
    \item Statements of the full set of assumptions of all theoretical results.
      Yes
    \item Complete proofs of all theoretical results. Yes
    \item Clear explanations of any assumptions. Yes
  \end{enumerate}

  \item For all figures and tables that present empirical results, check if you
    include:
  \begin{enumerate}
    \item The code, data, and instructions needed to reproduce the main
      experimental results (either in the supplemental material or as a URL).
      Yes
    \item All the training details (e.g., data splits, hyperparameters, how
      they were chosen). Not Applicable
    \item A clear definition of the specific measure or statistics and error
      bars (e.g., with respect to the random seed after running experiments
      multiple times). Yes
    \item A description of the computing infrastructure used. (e.g., type of
      GPUs, internal cluster, or cloud provider). Yes
  \end{enumerate}

  \item If you are using existing assets (e.g., code, data, models) or
    curating/releasing new assets, check if you include:
  \begin{enumerate}
    \item Citations of the creator If your work uses existing assets. Not
      Applicable
    \item The license information of the assets, if applicable. Not Applicable
    \item New assets either in the supplemental material or as a URL, if
      applicable. Not Applicable
    \item Information about consent from data providers/curators. Not Applicable
    \item Discussion of sensible content if applicable, e.g., personally
      identifiable information or offensive content. Not Applicable
  \end{enumerate}

  \item If you used crowdsourcing or conducted research with human subjects,
    check if you include:
  \begin{enumerate}
    \item The full text of instructions given to participants and screenshots.
       Not Applicable
    \item Descriptions of potential participant risks, with links to
      Institutional Review Board (IRB) approvals if applicable. Not Applicable
    \item The estimated hourly wage paid to participants and the total amount
      spent on participant compensation. Not Applicable
  \end{enumerate}
 \end{enumerate}

\appendix
\newpage
\onecolumn 

\section{Proof of Lemma~\ref{lem:splitting}: Splitting the domain}
\label{app:pf.l1}
\begin{proof}[Proof of Lemma~\ref{lem:splitting}]
$\implies$) We assume $p(\mu) \neq p(\lambda)$. Then either $p(\mu) \backslash p(\lambda)$ or
$p(\lambda) \backslash p(\mu)$ is not empty. We assume the first for now. And we let
$k_0 \in p(\mu) \backslash p(\lambda)$. Since $k_0 \notin p(\lambda)$, then its set of
dominators in $\lambda$ is not empty. If any of its dominators in $\lambda$ belongs to
$p(\mu)$, then we have found $k_0,k_1 \in p(\mu)$ such that $k_0 \preceq_\lambda k_1$.
Otherwise let $k_0'$ from $k_0$'s set of dominators in $\lambda$ since it is not
empty. We know that $k_0'$ does not belong to $p(\mu)$ and since all of its
dominators in $\lambda$ is included in $k_0$'s one (by transitivity), then none of its
dominators belongs to $p(\mu)$. Thus, we found $k_0' \notin p(\mu)$ such that it is
not dominated in $\lambda$ by any point from $p(\mu)$.

We now assume that $p(\lambda) \backslash p(\mu)$ is not empty. Then there exists $k_0
\in p(\lambda)$ such that $k_0 \notin p(\mu)$. Let $k_0$ be such an index and since it
belongs in $p(\lambda)$ no one dominates it and in particular points from $p(\mu)$.

$\Longleftarrow)$ We first assume that there exists $k_0, k_1 \in p(\mu)$ such
that $k_0 \preceq_\lambda k_1$. Let $k_0, k_1$ such indices. Since $k_0 \preceq_\lambda k_1$
then $k_0 \notin p(\lambda)$. Hence $p(\mu) \neq p(\lambda)$.

We now assume that there exists $k_0 \notin p(\mu)$ such that no points in $p(\mu)$
dominates it in $\lambda$. Let $k_0$ such an index. Either $k_0$ is now on the Pareto
set and we are done or there exists a point $k$ from the Pareto set of $\lambda$ that
dominates it. But then this point is not within $p(\mu)$ (because no points in
from $p(\mu)$ dominates $k_0$ in $\lambda$) and is in $p(\lambda)$. Hence $p(\mu) \neq p(\lambda)$.
\end{proof}

\section{Proof of Lemma~\ref{lem:tree.enums}: An efficient algorithm for our
  linear system of equations
}\label{app:fast.tree.enums}
To check whether our system of equations admits a solution, we could invoke a
linear programming solver to get an answer. Here we leverage the particular
structure of our set of inequalities to decide feasibility more efficiently.
First note that any of the inequalities that we might add to our system is of
the form $\lambda_0^{j_2} - \lambda_0^{j_1} \leq  \mu_k^{j_2} - \mu_k^{j_1}$. Thus our system of
inequalities might be viewed as a sequence of upper bounds for differences of
coordinates of $\lambda_0$. To check whether this system admits a solution we
introduce a directed multi-graph $G$ with nodes labeled by the $j \in [d]$. For
each constraint of the form $\lambda_0^{j_2} - \lambda_0^{j_1} \leq  \mu_k^{j_2} - \mu_k^{j_1}$ we
add an edge going from $j_1$ to $j_2$ which has value $\mu_k^{j_2} - \mu_k^{j_1}$.
We use a multi-graph representation because it has a one to one mapping with our
system of equation.

We recall a well-known \cite{erickson2019algorithms} equivalence between the
existence of a solution for a set of equations that can be encoded thusly.
\begin{lemma}
  Let $G = (V, E)$ be a finite directed graph where $E \subseteq V×V×\R$. The
  system
  \[
    \lambda \in \R^d
    \quad \text{s.t.}\quad
    \forall (i,j,v) \in E : \lambda_i - \lambda_j \le v
  \]
  has a solution iff $G$ has no negative cycle.
\end{lemma}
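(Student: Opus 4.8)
The plan is to prove the two directions of the equivalence separately, following the classical treatment of difference constraints via single-source shortest paths (as in \cite{erickson2019algorithms}). The ``only if'' direction is a one-line telescoping argument; the ``if'' direction is a potential/Bellman--Ford construction, and is where essentially all the content lies.

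For the forward direction I would assume the system has a solution $\lambda$ and take an arbitrary directed cycle $i_0 \to i_1 \to \cdots \to i_m = i_0$ of $G$, with edges $(i_{\ell-1}, i_\ell, v_\ell) \in E$. Adding up the $m$ inequalities $\lambda_{i_{\ell-1}} - \lambda_{i_\ell} \le v_\ell$ makes the left-hand side telescope to $\lambda_{i_0} - \lambda_{i_m} = 0$, so the cycle weight $\sum_\ell v_\ell$ is nonnegative; hence no negative cycle exists. (The same computation works verbatim for closed walks, so there is no need to distinguish simple cycles.)

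For the converse I would start from the hypothesis that $G$ has no negative cycle and build an explicit solution as a vector of shortest-path distances. First pass to the reversed graph $G'$ (an edge $j \to i$ of weight $v$ for each $(i,j,v) \in E$), which has the same cycle weights and hence also no negative cycle; this reversal is needed because the edge convention $(i,j,v) \leftrightarrow \lambda_i - \lambda_j \le v$ is the transpose of the one under which shortest-path potentials satisfy their defining inequalities. Then adjoin a fresh source $s$ with a weight-$0$ edge $s \to u$ to every vertex; this introduces no new cycle and makes every vertex reachable from $s$. I would then set $\lambda_u$ to be the length $D(u)$ of a shortest $s$-to-$u$ walk in $G' \cup \{s\}$, and argue this is well defined: it is $\le 0$ because of the direct weight-$0$ edge, and bounded below because, absent negative cycles, deleting a cycle from a walk never increases its weight, so the minimum is attained at one of the finitely many simple $s$-to-$u$ paths. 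Finally, optimality of $D$ gives $D(i) \le D(j) + v$ for every edge $j \to i$ of weight $v$ of $G'$, which unwound is exactly $\lambda_i - \lambda_j \le v$ for every $(i,j,v) \in E$, so $\lambda = (D(u))_u$ is a solution.

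The main obstacle — really the only nontrivial point — is the converse, and within it the bookkeeping around the auxiliary source: one needs the weight-$0$ edges both to make the distances finite and to handle graphs that are not strongly connected, while the no-negative-cycle hypothesis is exactly what prevents the distances from diverging to $-\infty$. Once these are set up, the verification of each constraint is the routine shortest-path triangle inequality, and the proof is complete.
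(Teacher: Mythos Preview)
Your proof is correct and follows essentially the same route as the paper's: the telescoping argument for the forward direction, and the ``add a source with weight-$0$ edges, take shortest-path distances, invoke the triangle inequality'' construction for the converse. The only difference is cosmetic --- you explicitly pass to the reversed graph to reconcile the edge convention $(i,j,v) \leftrightarrow \lambda_i - \lambda_j \le v$ with the direction in which shortest-path potentials naturally satisfy their inequalities, whereas the paper works directly with its own (opposite) orientation convention; the content is identical.
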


\begin{proof}
  We assume that $G$ contains a negative cycle $j_0 \to_{v_1} \cdots \to_{v_n}
  j_n = j_0$. Then, we know that for all $i \in [n]$ the equation $\lambda_{j_i} -
  \lambda_{j_{i-1}} \leq  v_i$ is present in our linear system. Summing these inequalities
  gives us
  $$ 0 = \lambda_{j_n} - \lambda_{j_0}
    = \sum_{i \in [d]} \lambda_{j_i} - \lambda_{j_{i-1}}
    \leq  \sum_{i \in [d]} v_i < 0 \;.$$
  Hence our system does not admit solutions.

  Now we assume that the graph doesn't contains a negative cycle and we
  introduce a source point $s$  which we connect to every vertex of $G$ with an
  edge of length $0$. This updated graph still doesn't contain negative cycles
  because no new cycles have been created. Thus we can define $\delta: j \in [d] \to
  \R$ the function that gives the length of the shortest path from $s$ to $j$.
  We claim that setting $\lambda_0^j$ to $\delta(j)$ respects every inequality and thus
  is a solution to our linear system of equations. This is due to the fact that
  for every edge $j_1 \to_v j_2$ the triangular inequality gives us that
  $\delta(j_2) \leq  \delta(j_1) + v$.
\end{proof}

As such, we could, at each internal node of our tree, create the graph
associated with the set of equations encountered so far and check the presence
of negative cycles. If there are any we backtrack, otherwise we go on. If we do
that, to check for presence of negative cycle we don't have a much better choice
than the Bellman-Ford algorithm \cite{schrijver2003combinatorial} because of the
presence of negative edges. This would yield a algorithmic complexity of $O(V
E)$ which in our case more often than not would look like $d^3$. We can reduce
that to a $d^2$ by noting that when we make our choice of $(k,j)$ when going
down the tree, the graph at the parent and the child node only differ by a few
edges, the one stemming from node $j$. Thus if we already know the shortest path
in the original graph (which is assumed to not contain negative cycles), we can
leverage that knowledge to speed up our negative cycle detection at the child
node, which we state in the next lemma.

\begin{lemma}
  Let $G = (V,E)$ a directed weighted graph with no negative cycles and such
  that $v: V × V \to\R$ is the length of the shortest path between every pair
  of points. Let $\{u_x\}_{x\in V}$ some new edges between $0$ and $x$ such that
  $G' = (V, E\cap \{u_x\}_{x\in V})$. Negative cycle detection and updated
  shortest paths can be performed in time $O(d^2)$.
\end{lemma}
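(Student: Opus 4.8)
The plan is to exploit the special structure of the new graph $G'$: it is obtained from $G$ (which has no negative cycles and whose all-pairs shortest-path matrix $v$ is known) by adding only the star of edges $\{u_x\}_{x \in V}$ emanating from the fixed node $0$. The key observation is that any cycle in $G'$ that is not already a cycle of $G$ must traverse at least one of the new edges, and since all new edges share the tail $0$, such a cycle — if it uses the new edges more than once — can be decomposed into shorter closed walks each passing through $0$ exactly once via a new edge. Hence it suffices to check whether there is a negative closed walk through $0$ that uses exactly one new edge $u_x = (0 \to x)$ with weight $w(u_x)$ and then returns from $x$ to $0$ along edges of $G$. The cost of the cheapest such return is exactly $v(x, 0)$, which is already tabulated. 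So a negative cycle appears in $G'$ iff $\min_{x \in V}\big(w(u_x) + v(x,0)\big) < 0$, which is an $O(d)$ scan (recall $|V| = d$ here).

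If no negative cycle is created, I then need to produce the updated all-pairs shortest paths $v'$ for $G'$. Here the idea is that any shortest path in $G'$ either avoids all new edges — in which case its length is unchanged and equals $v(i,j)$ — or it uses exactly one new edge $u_x$ (using two would mean revisiting $0$, and with no negative cycles that detour cannot help). A path from $i$ to $j$ using the new edge $u_x$ costs $v(i,0) + w(u_x) + v(x,j)$. Therefore
$$ v'(i,j) = \min\Big( v(i,j),\; \min_{x \in V}\big( v(i,0) + w(u_x) + v(x,j)\big) \Big). $$
The inner minimization over $x$, call it $q_{ij}$, factors: writing $r(x) = w(u_x) + v(x,j)$ for fixed $j$, we have $q_{ij} = v(i,0) + \min_x r(x)$, so for each fixed $j$ we precompute $m_j \triangleq \min_{x} (w(u_x) + v(x,j))$ in $O(d)$, giving all $m_j$ in $O(d^2)$, and then $v'(i,j) = \min(v(i,j),\, v(i,0) + m_j)$ fills the whole $d \times d$ matrix in $O(d^2)$. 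That matches the claimed bound, since the negative-cycle test was only $O(d)$.

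The main obstacle — and the step to argue carefully — is the claim that a shortest path (resp. negative cycle) in $G'$ need traverse the new edges at most once: this is where one must invoke the absence of negative cycles in $G'$ to rule out beneficial repeated visits to node $0$, and handle the degenerate cases (paths starting or ending at $0$, the new edge $u_0$ if present) so that the formulas above are exhaustive. A secondary subtlety is bookkeeping the direction of the new edges; the statement writes them as edges "between $0$ and $x$", and one should fix the convention ($0 \to x$ of weight $u_x$) and note that the symmetric case, or edges in both directions, is handled by running the same $O(d^2)$ update once more. Everything else is the routine verification that the min-formulas indeed give shortest-path lengths, which follows from the standard triangle-inequality characterization already used in the preceding lemma.
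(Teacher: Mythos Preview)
Your proposal is correct and follows essentially the same approach as the paper: both detect a negative cycle by testing whether $\min_x\big(w(u_x)+v(x,0)\big)<0$, and both update the all-pairs distances by arguing that a shortest walk in $G'$ uses at most one new edge, then reducing to an $O(d^2)$ computation. The only difference is presentational: the paper splits the update into three cases (paths to $0$, paths from $0$, and the rest), whereas your single formula $v'(i,j)=\min\big(v(i,j),\,v(i,0)+m_j\big)$ with precomputed $m_j=\min_x\big(w(u_x)+v(x,j)\big)$ collapses these cases into one pass; the two are equivalent once one notes $v(i,0)+v(0,j)\ge v(i,j)$ by the triangle inequality in $G$.
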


\begin{proof}
  If the graph $G'$ now contains a negative cycle, it is going through an
  updated edge as $G$ doesn't contain a negative cycle. By induction the
  minimal value of a cycle going from and back to $0$ without visiting any other
  nodes more than once is $\min_{x\neq 0} u_x + s(x, 0)$ where $s(x,y)$ is
  the length of the shortest simple path (a path that never goes twice through
  the same vertex) from $x$ to $y$ in $G'$. But that simple path from $x$ to
  $0$ don't go through any updated edge, hence $s(x,0) = v(x,0)$. Thus if all
  $u_x + v(x,0) \ge 0$ there is no negative cycle in our updated graph and if
  any $u(x) + v(x,0)$ is negative, we found a negative cycle. This can be
  checked in $O(d)$.

  We now assume that there is no negative cycle in $G'$ and set out to compute
  $s(x,y)$ for all $x,y \in V$. First as stated earlier, paths that go to $0$
  haven't changed cost as they cannot go through any of the updated edges. We
  now update edges going from $0$ and claim that $s(0,x) = \min B$ where $B =
  \{_y u_y + v(y,x) \,|\, y \in V\}\cup \{v(0,x)\}$.

  First, we show that $s(0,x) \in B$. Let $p = 0 \to_{e_1} y^* \to \cdots \to
  x$, then either $e_1$ is an edge that was present in $G$ and thus $s(0,x) =
  v(0,x)$ or $e_1$ was not and then it has weight $u_{y^*}$. We call $l$ the
  length of the path $x \to \cdots \to y^*$ in $G'$. Since the path $p$ is the
  shortest between $0$ and $x$ and that there is no negative cycle, the part of
  $p$ between $y^*$ and $x$ doesn't go through $0$ and as such any new edges.
  Since it is the shortest path between $y^*$ and $x$ and it stays in common
  part of $G$ and $G'$, we get that $l = v(y,x)$. Hence the length of this path
  is $u_{y^*} + v(y^*,x)$.

  The shortest path between $0$ and $x$ in $G$ is still a path in $G'$ thus
  $s(0,x) \leq  v(0,x)$. Now for all $y\in V$, either the shortest path from $y$ to
  $x$ in $G'$ doesn't goes though $0$ or it goes through it. In the first case,
  the path $0 \to y \to \cdots \to x$ in $G'$ has value $u_y + v(y,x)$ and since
  it is the length of a path from $0$ to $x$, we know that $s(0,x) \leq  u(y) +
  v(y,x)$. In the second case, we know that the shortest path from $y$ to $x$ in
  $G'$ goes exactly once through $0$ (otherwise there would be a negative
  cycle), and using the optimality of subpath, we know that the part of the path
  from $y$ to $x$ after going through $0$ is the shortest path between $0$ and
  $x$, thus the next point visited is $y^*$ and this part of the path has length
  $s(0,x)$, which is such that $s(0,x) \leq  v(0,x)$. Similarly since the part
  between $y$ and $0$ is the shortest path between these points in $G'$ and that
  it doesn't go through an edge stemming from $0$, we know that it has length
  $v(y,0)$ giving us the final length from this path to be $u(y) + v(y,0) +
  s(0,x)$. Since there is no negative cycles we know that $u(y) + v(y,0) \geq  0$.
  Since all path from $y$ to $x$ in $G$ are still path in $G$' we have that
  $v(y,0) + s(0,x) = s(y,x) \leq  v(y,x)$. This gives us that $$ s(0,x) \leq  u(y) +
  v(y,0) + s(0,x) \leq  u(y) + v(y,x) \;.$$ Hence for all $b \in B$, $b\geq  s(0,x)$ and
  $s(0,x) \in B$, which give us that $s(0,x) = \min B$. Hence all shortest path
  from $0$ to other nodes can be updated in $O(d^2)$.

  To conclude, we only need to update all distances going neither from nor to
  $0$. We claim that for $x$ and $y$ which are not $0$, $s(x,y) = \min \{ v(x,y)
  s(x,0) + s(0,y)\}$. First, we highlight that $s(x, 0) + s(0,y)$ is the length
  of the shortest path from $x$ to $y$ going through $0$ in $G'$ and that
  $v(x,y)$ is the length of the shortest path from $x$ to $y$ in $G$ which is
  still present in $G'$. Thus both of them are path length in $G'$ from $x$ to
  $y$ and we just need to show that either of them has an optimal length. The
  shortest path from $x$ to $y$ in $G'$ either goes through an edge stemming
  from $0$ or it doesn't. In the first case, it has value $v(x,y)$ otherwise it
  has value $ s(x,0) + s(0,y)$. Hence $s(x,y) = \min \{v(x,y); s(x,0) +
  s(0,y)\}$. All these distances can be updated in $O(d^2)$ which concludes our
  proof.
\end{proof}

\section{Proof of Lemma~\ref{lem:counting}: Counting cells}\label{app:counting}
As stated earlier, the number of possible $\varphi$ maps is $d^p$ but of these,
quite a lot don't lead to a cell. To better estimate the complexity of our
algorithm, we would like, with $p$ and $d$ known, to give a tighter estimate of
the number of cells. In this section, we start by showing for small dimensions
how to derive explicitly the number of cells. Then we provide an upper bound for
arbitrary dimension.

\textbf{Dimension 1:} Though it might feel a bit dull, we will start by tackling
the case of best arm identification. As there is only one dimension to choose
from, the number of cells is exactly one.

For higher dimensions, we first start by noting that cells are invariant to
translation by the all-ones vector, as adding the same offset to $\lambda^j_0$ in
every dimension will not change the comparison between the $\mu^j_k - \lambda^j_0$ for
different $j$. Thus, even though cells live in $\R^d$, they can be reduced to
the orthogonal space of the all-ones vector which has dimension $d-1$.

In \textbf{dimension 2}, the direction toward which a point $\mu_k$ should go are
given by whether $\lambda_0$ is above or below the diagonal line stemming from $\mu_k$.
In the reduced space this is equivalent to being right or left of the projected
$\mu_k$. As such there is $p + 1$ different cells: one to the right of every
point, one to the left of the rightmost point and one to the right of all the
other points and so on until all points are on the left.

In \textbf{dimension 3}, all points shatter the space (original and reduced) in
three distinct parts: one where the distance to the first, second or third
dimension is the smallest. In the reduced space, they are delimited by three
lines stemming from the projected point and going to infinity. Using this
representation (shown in Figure~\ref{fig:3d_count}, it is possible to use
Euler's formula to count the number of different cells. To do so, we will count
the number of cells iteratively by adding the $\mu_k$ one by one. We start with no
points, which means we have 1 cell, 0 edges and 0 vertices. When we add the
$k+1^\text{th}$ point, for all the previously added points, one of its edge will
cross one of the previously added edges adding 1 point and 2 edges per point.
This makes the total number of edges and vertices added by adding a new point be
$3 + 2k$ for the edges and $1 + k$. Using Euler's formula, we get that the
number of added cells is given by $3 + 2k - (k + 1) = k + 2$. Hence the number
of cells for $k$ points in the Pareto set is
\begin{equation}\label{eq:3d_count}
  1 + \sum_{i = 1}^k (i-1) + 2 = \frac{k(k+1)}{2}\,.
\end{equation}

\begin{figure}[ht]
  \begin{center}
    \begin{subfigure}[t]{0.3\linewidth}
      \centerline{\includegraphics[width=\linewidth]{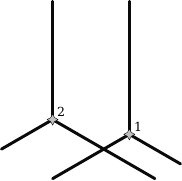}}
      \caption{Initial setting with two points.}
    \end{subfigure}%
    \hspace{0.5cm}
    \begin{subfigure}[t]{0.3\linewidth}
      \centerline{\includegraphics[width=\linewidth]{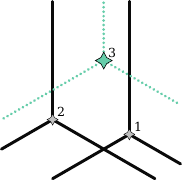}}
      \caption{
        An additional third point is added, adding a vertex and three edges.
      }
    \end{subfigure}%
    \hspace{0.5cm}
    \begin{subfigure}[t]{0.3\linewidth}
      \centerline{\includegraphics[width=\linewidth]{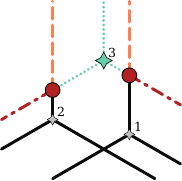}}
      \caption{
        For every previously added point, one of the added edges intersects with
        an old edge adding one vertex and two edges.
      }
    \end{subfigure}
  \end{center}
  \caption{Example of a point being added in dimension 3}
  \label{fig:3d_count}
\end{figure}

\begin{proof}[Proof of Lemma~\ref{lem:counting}]
  To show this lemma, we reuse our graph representation from the previous
  section. For a specific $(j_1,j_2)$, the list of edges going from $j_1$ to
  $j_2$ in our multi graph $G$ is given by $\{ \mu_k^{j_2} - \mu_k^{j_1} \,|\, k \in
  \cal \varphi^{-1}(j) \}$ and for a specific node $k$ such that $j_1 = \varphi(k)$ an edge
  is created between $j_1$ and every other $j_2 \neq j_1 \in [d]$ with value
  $\mu_k^{j_2} - \mu_k^{j_1}$.

  First, let $v(\varphi) = \left(\text{Card}(\varphi^{-1}(j))\right)_{j \in [d]}$. We want
  to show that for a specific value of $v(\varphi)$, there is at most one possible $\varphi$
  which leads to a valid cell. To do that, we show that for $\varphi, \varphi'$ such that
  $v(\varphi) = v(\varphi')$, there exists a particular permutation $\pi$ such that $\varphi' =
  \varphi\circ \pi$. Then we show that if there were no negative cycles in $G$, the
  multi-graph associated with $\varphi$, then the transformation of $G$ to $G'$ after
  having applied the permutation creates a negative cycle. We conclude by saying
  that the cardinality of the image of $v$ is $\binom{p+d-1}{d-1}$.

  Let $\varphi, \varphi'$ such that $v(\varphi) = v(\varphi')$ but $\varphi \neq \varphi'$. For each $j \in [d]$ we
  let $\pi_j: \varphi'^{-1}(j) \to \varphi^{-1}(j)$ be a bijection between its domain and its
  image (there exists one since they have the same cardinal $v(\varphi)_j =
  v(\varphi')_j$). We now define $\pi:
  k \in p(\mu) \to \pi_{\varphi'(k)}(k)$. Let $k_0,k_1$ such that $\pi(k_0) = \pi(k_1)$, since
  the $\varphi^{-1}(j)$ we know that there exists a unique $j$ such that
  $\pi(k_0) \in \varphi^{-1}(j)$, thus $k_0, k_1 \in \varphi'^{-1}(j)$ and $\pi_j(k_0) =
  \pi_j(k_1)$ but since $\pi_j$ is a bijection between $\varphi'^{-1}(j)$ and $\varphi^{-1}(j)$
  we get that $k_0 = k_1$. Hence $\pi$ is injective. And since $\pi$'s domain and
  image has the same cardinal, it is bijective. Now, for all $k \in p(\mu)$ with
  $j = \varphi'(k)$, $\pi(k) \in \varphi^{-1}(j)$ hence $\varphi \circ \pi (k) = j = \varphi'(k)$. Thus, for
  all $\varphi, \varphi'$ such that $v(\varphi) = v(\varphi')$ there exists a permutation $\pi$ such that
  $\varphi' = \varphi \circ \pi$.

  We now decompose $\pi$ in disjoints cycles. If a cycle $c = (k_1, \cdots, k_n)$
  is such that $\varphi(k_1) = \cdots = \varphi(k_n)$ then this cycle can be omitted from
  the permutation and it would yield the same $\varphi'$. If all cycles were to be
  removed this way, we would end up with $\varphi = \varphi' \circ Id = \varphi'$, thus we can
  conclude that at least one cycle as to be such that $\text{Card}\left(\varphi\{k_1,
  \cdots, k_n\}\right) > 1$. We now restrict ourselves to $\pi$ with only such
  cycles as other can be safely deleted without changing the resulting $\varphi'$. We
  now focus on a cycle $c=(k_1, \cdots, k_n)$ such that there exists $x < y \in
  [n]$ such that $\varphi(k_x) = \varphi(k_y)$ and we prove that there exists two cycles
  $c_x, c_y$ by which $c$ can be replaced in $\pi$ which yields the same $\varphi'$.
  Since $\varphi(k_x) = \varphi(k_y)$ then $\varphi\circ\pi = \varphi\circ (k_x k_y) \circ\pi$, and $(k_x
  k_y) \circ \pi$ is permutation that has the same cycle than $\pi$ except for $c$
  which has been split in two disjoints cycles (they are disjoint from each
  other and from cycles in $\pi$ which are not $c$): $c_x = (k_x, \cdots,
  k_{y-1})$ and $c_y = (k_y, \cdots, k_{x-1})$. Thus $\pi$ and $(k_x k_y)\circ \pi$
  yield the same $\varphi'$, and we can restrict ourselves to permutation such that
  all $\varphi(k_i)$ are distinct within each different cycle.

  We let $\pi$ be such a permutation and we introduce $G$, the multi-graph
  associated with $\varphi$ and $G'$ the one for $\varphi' = \varphi \circ \pi$. We now assume that
  $\varphi$ is a valid map and we will restrict the rest of our study on the changes
  operated on $G$ by a specific cycle $c = (k_i)_{i \in [n]}$. Let $i \in[n],
  j_1= \varphi(k_i)$ and $j_2 = \varphi'(k_i) = \varphi(k_{i+1})$. Since $j_1 \neq j_2$, we know
  that there exists an edge in $G$ between $j_1$ and $j_2$ of value $v_i
  \triangleq \mu_{k_i}^{j_2} - \mu_{k_i}^{j_1}$, respectively in $G'$ we know that
  there is an edge between $j_2$ and $j_1$ of value $\mu_{k_i}^{j_1} -
  \mu_{k_i}^{j_2} = -v_i$. As $\varphi$ is assumed to be a valid map, the cycle from $G$
  $$ \varphi(k_1) \to_{v_1} \cdots \to_{v_{n-1}} \varphi(k_n) \to_{v_n} \varphi(k_1) $$
  is not negative. As such, the cycle from $G'$ given by
  $$ \varphi'(k_n) \gets_{-v_1} \varphi'(k_1) \gets_{-v_2} \cdots
    \gets_{-v_n} \varphi'(k_n) $$
  is a negative cycle. Thus there is at most one valid map for every point of
  the image of $v$.

  The image of $v$ is the set of vectors of length $d$ whose entries are natural
  numbers that sum to $p$. It is a well-known result that this set has
  cardinality $\binom{p+d-1}{d-1}$, which concludes the proof.
\end{proof}

We believe that this bound is in fact the exact cell count (as seen for
dimensions up to $3$, cf Eq.\ref{eq:3d_count} and by simulations for $p,d$ up to
$11$) but we settle for this upper bound in the analysis of the complexity. What
this tells us is that the number of leaves within the tree is at most
$\binom{p+d-1}{d-1}$. Using the observation that if an internal node is not
empty there always exists a non-empty leaf below it, we know that the number of
valid internal nodes at a given depth is bounded by $\binom{p+d-1}{d-1}$. Thus,
the number of non-empty internal nodes is bounded by $p\binom{p+d-1}{d-1}$. Each
of these non-empty internal nodes may have at most $d$ children which can be
non-empty or empty internal or non-empty or empty leaves. As we run our cell
elimination procedure for each child of each non-empty internal node, we run our
procedure at most $d p \binom{p+d-1}{d-1}$ times. Since this procedure has
complexity $d^2$ we end up with a complexity of $d^3 p \binom{p + d-1}{d-1}$ to
construct our tree.

\section{Speed-up for Dimension $2$}\label{app:speedup}
In this section we present a speed-up in run time available for dimension $d=2$.
The transportation cost computations both for removing a point from the front
and for adding a point to the front are presented next.

\subsection{Removing a point}
We present an update of Lemma~\ref{lem:removing} for dimension $2$. Here, we can
leverage the geometry of our problem to disregard most of the pairs $(k_0,
k_1)$. Let's assume that we are considering the pair $(k_0, k_2)$ and that there
exists an arm $k_1$ in the Pareto set between $k_0$ and $k_2$ (i.e. $\mu^1_{k_0} \geq 
\mu^1_{k_1} \geq  \mu^1_{k_2}$ and $\mu^2_{k_0} \leq  \mu^2_{k_1} \leq  \mu^2_{k_2}$). Now the inf
reached at $\lambda^1_{k_0} = \lambda^1_{k_2}$ will be either above or below $\mu^1_{k_1}$. If
it is reached below then $\mu^1_{k_1} = \lambda^1_{k_1} \geq  \lambda^1_{k_0}$ which means the
cost of shadowing $k_0$ with $k_2$ is higher than the cost of shadowing $k_0$
with $k_1$ and if it is above then $\mu^1_{k_1} = \lambda^1_{k_1} \leq  \lambda^1_{k_2}$ which
means the cost of shadowing $k_0$ with $k_2$ is higher than the cost of
shadowing $k_1$ with $k_2$. Hence by ordering the Pareto set, we can restrict
ourselves to only look at adjacent points within the Pareto set as these will
yield the smallest transportation cost. This means that the number of pair that
we need to examine is just $O(p)$ giving us the reduced computation cost $O(p)$
for removing a point.

However, this technique doesn't scale well with higher dimensions. Given three
points $(k_0, k_1, k_2)$, a similar result can be obtained if there exists $j_a$
such that $\mu_{k_0}^{j_a} \geq  \mu_{k_2}^{j_a} \geq  \mu_{k_1}^{j_a}$ and for all $j \neq
j_a,\,\mu_{k_0}^j \leq  \mu_{k_2}^j \leq  \mu_{k_1}^j$. In this setting the minimizer $\lambda$ of
the transportation cost to shadow $k_0$ by $k_1$ is such that either
$\lambda_{k_0}^{j_a} \leq  \mu_{k_2}^{j_a}$ and then $\lambda_{k_0} \preceq \mu_{k_2}$ or
$\lambda_{k_1}^{j_a} ( = \lambda_{k_0}^{j_a} ) \geq  \mu_{k_2}^{j_a}$ and then $\mu_{k_2} \preceq
\lambda_{k_1}$. As previously, we found that computing the value for the pair $k_0,
k_1$ was unnecessary because either the pair $(k_0, k_2)$ or $(k_2, k_1)$ would
have yielded a smaller value. But this was done in a pretty constrained way,
removing the constraint that there is at most one direction alongside which
$\mu_{k_0}^j \geq  \mu_{k_1}^j$ won't lead to any results. To see that, let $j_a, j_b$
such that $\forall j \in \{j_a, j_b\},\,\mu_{k_0}^j \geq  \mu_{k_2}^j \geq  \mu_{k_1}^j$, then
we might end up with $\lambda_{k_0}^{j_a} \leq  \mu_{k_2}^{j_a}$ and $\lambda_{k_0}^{j_b} \geq 
\mu_{k_2}^{j_b}$, leading to no claim of the shape $\lambda_{k_0} \preceq \mu_{k_2}$ or
$\mu_{k_2} \preceq \lambda_{k_1}$.

\subsection{Adding a point}
Here we present an update of Lemma~\ref{lem:adding} for dimension $2$. Again,
the geometry of the Pareto front allows us to speed up the computation of the
minimal transportation cost to add a point to the Pareto set.

We recall the definition of the function $g_{k_0}$ which, given a new location
$\lambda_0$ for the point $\mu_{k_0}$ (labeled $\mu_0$ for ease of notation) gives the
smallest transportation cost to add this point to the Pareto set while moving
it to the new location.
$$ g_{k_0}(\lambda_0) = \frac{w_0}{2}||\mu_0 - \lambda_0||^2 + \sum_{k\in p(\mu)}
  \frac{w_k}{2}\min_{j\in[d]}\left(\mu_k^j - \lambda_0^j\right)^2_+. $$

And given a map $\varphi:p(\mu)\to[d]$, we also recall the function
$$ g_{k_0,\varphi}(\lambda_0) = \frac{w_0}{2}||\mu_0 - \lambda_0||^2 + \sum_{k\in p(\mu)}
  \frac{w_k}{2}\left(\mu_k^{\varphi(k)} - \lambda_0^{\varphi(k)}\right)^2_+. $$

The functions $g_{k_0}$ and $g_{k_0,\varphi}$ are equal on a set $S(\varphi)$ which consists
of the solutions of the following linear system:
$$ \lambda_0\in\R^d\text{ st. }
  \forall k\in p(\mu),\,\forall j \in [d],\; \mu_k^{\varphi(k)} - \lambda_0^{\varphi(k)}
    \leq  \mu_k^j - \lambda_0^j $$
and outside of this set, $g_{k_0} \leq  g_{k_0,\varphi}$. We highlight here that for any
$\lambda_0\in S(\varphi)$, the line generated by $\lambda_0 + t\mathds{1}$ is included in $S(\varphi)$
where $\mathds{1}$ consists of the all-one vector. Hence the geometry of cells
can be reduced to the orthogonal space to $\R\mathds{1}$. Thus from now on, we
will decompose $\lambda_0$ in an $s$ part which lives in $\R^{d-1}$ and a $t$ part
which lives in $\R$, such that $\lambda_0 = Ms+t\mathds{1}$, where $M$ is a $d × d-1$
matrix such that its columns are all
orthogonal to each others and to $\mathds{1}$, and of
norm $1$. We apply the analogous decomposition to $\mu_0$ and $(\mu_k)_{k\in p(\mu)}$
leading to $s_0, t_0$ and $(s_k, t_k)_{k\in p(\mu)}$ and we redefine the functions
$g_{k_0}$ and $g_{k_0,\varphi}$ accordingly:
$$ g_{k_0,\varphi}(s,t) = \frac{w_0}{2}\left(||s_0 - s||^2 + d(t_0 - t)^2\right)
  + \sum_{k\in p(\mu)} \frac{w_k}{2}
    \left(\left(M(s_k - s)\right)_{\varphi(k)} + t_k -t\right)^2_+.$$

For a given $s$ and $\varphi$, we are interested in finding the $t$ that minimizes
$g_{k_0,\varphi}(s,t)$. We label $t^*_\varphi(s)$ and $g_{k_0,\varphi}(s)$ (resp. $t^*(s)$ and
$g_{k_0}(s)$ for the minimizer and the minimal value of $g_{k_0}(s,t)$ with
respect to $t$).

Now that we introduced this reparametriaztion of the problem, we want to show
that the function $t^*(s)$ is piecewise linear and that in dimension $d=2$ it is
possible to enumerate its pieces and minimize $g_{k_0}$ with a lower complexity
than before.

In dimension 2, the constraints on matrix $M$ leaves us only two choice: either
$\begin{bmatrix}-\frac{1}{\sqrt{2}} & \frac{1}{\sqrt{2}}\end{bmatrix}$ or
$\begin{bmatrix}\frac{1}{\sqrt{2}} & -\frac{1}{\sqrt{2}}\end{bmatrix}$. We can
pick either without loss of generality. We settle for
$\begin{bmatrix}\frac{1}{\sqrt{2}} & -\frac{1}{\sqrt{2}}\end{bmatrix}$. We
assume for the rest of this section that $s_1 \leq  \cdots \leq  s_k$

It is also possible to enumerate the valid $\varphi$ maps in dimension 2. There are
$p+1$ different cells given by: $(-\infty,s_1], [s_1, s_2], \cdots, [s_{p-1},
s_p], [s_p, +\infty)$. And the $\varphi$ map associated with the $l^\text{th}$ cell of
this list is
$$\varphi:k\in p(\mu) \mapsto \begin{cases}
  2 &\text{if } k<l \\
  1&\text{otherwise}
\end{cases}.$$
This is due to the fact that $M(s_k - s) = \frac{1}{\sqrt{2}} (s_k - s)
\begin{bmatrix} 1 & -1 \end{bmatrix}$. So when $s\leq s_k$, $M_2(s_k -s) \leq  M_1(s_k
-s)$ and $M_1(s_k-s)\leq M_2(s_k-s)$ when $s\geq s_k$.

We let $t_{k,\varphi}:s \mapsto M_{\varphi(k)}(s_k -s)) + t_k$. It is a linear function
of $s$.

\begin{lemma}
  The function $t_k(\cdot)$ such that $\forall \varphi,\,\forall x\in S(\varphi),\, t_k(s) =
  t_{k,\varphi}(s)$ is well-defined and piecewise linear of slope $\frac{1}{\sqrt{2}}$
  before $s_k$ and $-\frac{1}{\sqrt{2}}$ after.
\end{lemma}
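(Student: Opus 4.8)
The plan is to prove that, despite appearances, $t_{k,\varphi}(s)$ does not depend on the cell $\varphi$ one is in, and then to read off the slopes from an explicit formula in dimension two. So I would not treat this as an optimisation statement at all, but as a bookkeeping identity about the reparametrised cells $S(\varphi)$.

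The first step is to note that $S(\varphi)$, expressed in the coordinates $(s,t)$, is a cylinder along the $t$-axis. Substituting $\lambda_0 = Ms + t\mathds{1}$ and the analogous decomposition of $\mu_k$, each defining inequality $\mu_k^{\varphi(k)} - \lambda_0^{\varphi(k)} \le \mu_k^j - \lambda_0^j$ becomes
$$ \big(M(s_k - s)\big)_{\varphi(k)} + (t_k - t) \;\le\; \big(M(s_k-s)\big)_j + (t_k - t), $$
and the common term $t_k - t$ cancels. Hence $(s,t) \in S(\varphi)$ if and only if, for every $k \in p(\mu)$, $\varphi(k) \in \arg\min_{j \in [d]} \big(M(s_k - s)\big)_j$ --- a condition on $s$ alone. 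Consequently, for any $s$ and any $\varphi$ whose cell contains a point with that $s$-coordinate, $M_{\varphi(k)}(s_k - s) = \min_{j}\big(M(s_k - s)\big)_j$, so
$$ t_{k,\varphi}(s) \;=\; M_{\varphi(k)}(s_k - s) + t_k \;=\; t_k + \min_{j \in [d]} \big(M(s_k - s)\big)_j $$
takes the same value for every admissible $\varphi$; ties in the $\arg\min$ on cell boundaries cause no issue since they simply produce equal values. This is exactly the well-definedness of $t_k(\cdot)$, and since every $s \in \R$ lies in some non-empty cell (choose $\varphi(k)$ to be any minimiser of $j \mapsto (M(s_k-s))_j$), the function $t_k$ is defined on all of $\R$.

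The second step is the explicit computation in $d = 2$. With the chosen $M$ we have $M_1(s_k - s) = \frac{s_k - s}{\sqrt 2}$ and $M_2(s_k - s) = -\frac{s_k - s}{\sqrt 2}$, so $\min_j \big(M(s_k-s)\big)_j = -\frac{|s_k - s|}{\sqrt 2}$ and therefore
$$ t_k(s) \;=\; t_k - \frac{|s - s_k|}{\sqrt 2}. $$
This is affine with slope $+\frac{1}{\sqrt 2}$ on $\{s \le s_k\}$ and slope $-\frac{1}{\sqrt 2}$ on $\{s \ge s_k\}$, with the single break at $s = s_k$; this is also consistent with the earlier description of the $p+1$ cells as the intervals cut out by $s_1 \le \cdots \le s_p$, along each of which $\varphi(k)$ --- and hence the slope of $t_{k,\varphi}$ --- is constant.

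I do not expect a genuine obstacle here. The only point requiring care is the consistency of the definition of $t_k$, which quantifies over all $\varphi$ with $s \in S(\varphi)$: one must check that distinct cells meeting at a given $s$ assign the same value. This is immediate once $S(\varphi)$ has been rewritten via the $\arg\min$ condition above, after which everything reduces to the one-line two-dimensional computation.
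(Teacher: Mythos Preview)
Your argument is correct. It differs from the paper's only in how well-definedness is established: the paper glues the pieces $t_{k,\varphi}$ together by checking agreement at the cell boundaries $s_l$ (adjacent cells differ only in $\varphi(l)$, and at $s=s_l$ both choices give $t_l$), whereas you bypass the boundary check by observing directly that on $S(\varphi)$ one has $M_{\varphi(k)}(s_k-s)=\min_j\big(M(s_k-s)\big)_j$, which yields the global closed form $t_k(s)=t_k-\tfrac{1}{\sqrt 2}\,|s-s_k|$. Your route makes the well-definedness dimension-independent and reads off the two slopes immediately; the paper's boundary-matching argument is more pedestrian but arrives at the same conclusion, and the slope computation is identical in both.
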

\begin{proof}
  Since the $S(\varphi)$ provide a tesselation of the space, we only need to check
  that given two maps $\varphi_1$ and $\varphi_2$, the value at the intersection of their
  cell boundaries is equal. The only points at which cells have a non empty
  intersect are the $s_k$'s. These cells are associated with the map $\varphi_1$ that
  maps every index stricly below $k$ to $2$ and the other ones to $1$ and $\varphi_2$
  the one that maps every index below or equal to $k$ to $2$ and the other ones
  to $1$. But $t_{k,\varphi_1}(s_k) = t_k = t_{k,\varphi_2}(s_k)$, hence $t_k(\cdot)$ is
  well defined and on each of the $S(\varphi)$ it is linear thus it is piecewise
  linear. Moreover, using the cells described earlier, we have that for $s<s_k$
  (resp. $s>s_k$), the $\varphi$ map associated with the cell in which $s$ lives maps
  $k$ to $2$ (resp. $1$), thus $t_k(\cdot)$ is linear on $s\leq s_k$ (resp. $s\geq s_k$)
  with slope $\frac{1}{\sqrt{2}}$ (resp. $-\frac{1}{\sqrt{2}}$).
\end{proof}

The Figure \ref{fig:samplet_star} gives an example of a construction of $t^*$
and the $t_k(\cdot)$. The red dashed line represent here each individual
$t_k(\cdot)$ where we can easily see the increasing part up to $s_k$ followed by
a decreasing part.

\begin{lemma}
  $t^*_\varphi$ and $t^*$ are well-defined and piecewise linear.
\end{lemma}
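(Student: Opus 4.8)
The plan is to get well-definedness for free from strong convexity and then obtain piecewise linearity by slicing the domain into finitely many polyhedral pieces on which the minimizer is affine. For a fixed $s$, the map $t \mapsto g_{k_0,\varphi}(s,t)$ is strongly convex: the term $\frac{w_0}{2}d(t_0-t)^2$ contributes curvature $w_0 d>0$, while every other summand $\frac{w_k}{2}\bigl((M(s_k-s))_{\varphi(k)}+t_k-t\bigr)_+^2$ is convex and $C^1$ in $t$ (a convex nondecreasing function composed with an affine map). Hence the minimizer $t^*_\varphi(s)$ exists and is unique. Since $u\mapsto u_+^2$ is nondecreasing we have $\min_j(x_j)_+^2=(\min_j x_j)_+^2$, so $g_{k_0}(s,\cdot)$ has the same form with the affine-in-$t$ quantity $\min_j\bigl((M(s_k-s))_j+t_k-t\bigr)$ in place of $(M(s_k-s))_{\varphi(k)}+t_k-t$; it is therefore strongly convex in $t$ as well, giving a unique minimizer $t^*(s)$. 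I would also record the elementary fact that the cells $S(\varphi)$ depend only on $s$, cover $\R^{d-1}$, and satisfy $(M(s_k-s))_{\varphi(k)}=\min_j(M(s_k-s))_j$ for all $k$ on $S(\varphi)$; consequently $g_{k_0}(s,\cdot)=g_{k_0,\varphi}(s,\cdot)$ and $t^*(s)=t^*_\varphi(s)$ whenever $s\in S(\varphi)$, so it suffices to handle $t^*_\varphi$ and then glue over the finitely many cells.

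For the piecewise linearity of $t^*_\varphi$, write $a_k(s)=(M(s_k-s))_{\varphi(k)}+t_k$, an affine function of $s$, so that $t^*_\varphi(s)$ is the unique solution $t$ of $w_0 d\,(t-t_0)=\sum_{k\in p(\mu)}w_k\,(a_k(s)-t)_+$. The finite family of affine functions $(s,t)\mapsto a_k(s)-t$ induces a hyperplane arrangement partitioning $\R^{d-1}\times\R$ into finitely many closed polyhedral cells; on each such cell every term $(a_k(s)-t)_+$ is identically $0$ or identically $a_k(s)-t$, so the optimality equation becomes a single affine equation in $(s,t)$ whose solution set inside the cell is (part of) an affine hyperplane that is a graph over $s$. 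Projecting onto the $s$-coordinate, $t^*_\varphi$ is affine on the image of each cell, and these images form finitely many polyhedra covering $\R^{d-1}$. Continuity of $t^*_\varphi$ follows from the joint continuity of $g_{k_0,\varphi}$ together with the uniform (in $s$) strong convexity in $t$: the argmin of a jointly continuous, uniformly strongly convex family is continuous in the parameter. A continuous function that is affine on each piece of a finite polyhedral cover is piecewise linear, so $t^*_\varphi$ is piecewise linear; the same continuity-plus-finite-cover reasoning applied to the cells $S(\varphi)$, on which $t^*=t^*_\varphi$, shows $t^*$ is piecewise linear as well.

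The main obstacle is the gluing step: checking that the affine pieces produced on neighbouring arrangement cells, and on neighbouring cells $S(\varphi)$, actually agree along common boundaries, so that the result is a genuine single-valued continuous piecewise-linear function. My preferred way around this is not to match coefficients by hand but to invoke uniqueness of the minimizer plus joint continuity of the objective, which force the pieces to fit together; the already-proved lemma on the well-definedness and piecewise linearity of $t_k(\cdot)$ is exactly the $d=2$ template for this argument. A minor point to track is the degenerate case of vanishing weights (e.g.\ $w_{k_0}=0$), where strong convexity in $t$ can fail and uniqueness must be argued or the case excluded by the assumptions on the sampling rule.
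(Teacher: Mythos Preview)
Your proof is correct and rests on the same mechanism as the paper's: fix the active set $K=\{k:\,a_k(s)-t>0\}$, observe the first-order condition becomes an affine equation in $(s,t)$ on each region where $K$ is constant, solve it, and glue by continuity of the argmin coming from strong convexity. Your packaging via a hyperplane arrangement in $(s,t)$-space is a clean, dimension-agnostic way to say this; the paper instead works directly in the $s$-variable, differentiating the implicit equation to get the explicit slope
\[
\frac{\partial t^*_\varphi}{\partial s}(s)=-\frac{\sum_{k\in K(s)} w_k M_{\varphi(k)}}{d\,w_0+\sum_{k\in K(s)} w_k},
\]
and then reads off that every slope lies in $(-\tfrac{1}{\sqrt2},\tfrac{1}{\sqrt2})$. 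That slope bound is not part of the lemma statement, but it is the payload the paper needs next: it guarantees $t^*$ crosses each tent function $t_k(\cdot)$ exactly twice and in a prescribed order, which is what makes the $O(p\log p)$ sweep work. Your argument establishes the lemma as stated but does not surface this bound, so if you continue to the algorithmic part you will have to revisit the explicit slope computation anyway. Your remark on $w_{k_0}=0$ is apt; the paper tacitly assumes $w_0>0$ throughout this section.
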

\begin{proof}
$g_{k_0,\varphi}$ is differentiable hence, $t^*_\varphi$ is such that $\frac{\partial
g_{k_0,\varphi}}{\partial t}(s, t^*_\varphi(s)) = 0$. The partial derivative of $g_{k_0,\varphi}$
with respect to its second argument is
$$ \frac{\partial g_{k_0,\varphi}}{\partial t}(s,t) = dw_0 (t-t_0) + \sum_{k\in p(\mu)}
  w_k (t - t_{k,\varphi}(s))_- $$

Since $g_{k_0,\varphi}(s,\cdot)$ is strongly convex it admits exactly one minimizer,
thus, there exists $t_\varphi^*(s)$ such that $t_\varphi^*(s)$ is the minimizer of
$g_{k_0,\varphi}$. However, the negative part in the sum makes it so we cannot solve
easily for $\frac{\partial g_{k_0,\varphi}}{\partial t}(s,t^*_\varphi(s)) = 0$. But, since
$g_{k_0, \varphi}$ is convex, $t^*_\varphi$ is continuous and then $K(s) \triangleq
\mathds{1}\{t^*_\varphi(\cdot) \leq  t_{k,\varphi}(\cdot)\}$ is piecewise constant and only
jumps when $t^*_\varphi$ meets one of the $t_{k,\varphi}$. Thus, if we know $K(s)$ and
$t^*_\varphi(s)$ is different from all the $t_{k,\varphi}(s)$ we can differentiate $t^*_\varphi$
in $s$ giving us the following expression:
$$ \frac{\partial t^*_\varphi}{\partial s}(s) =
  -\frac{\sum_{k\in K(s)} w_k M_{\varphi(k)}}{dw_0 + \sum_{k\in K(s)} w_k} $$
which is constant while $K(s)$ stays the same, meaning that $t^*_\varphi(s)$ is
actually piecewise linear. Also its slope is always in
$(-\frac1{\sqrt2},\frac1{\sqrt2})$.

$t^*$ is equal to $t^*_\varphi$ on $S(\varphi)$ meaning that $t^*$ is also piecewise linear
and witth slopes in $(-\frac1{\sqrt2}, \frac1{\sqrt2})$. This means that $t^*$
will only meet $t_k(\cdot)$ twice, once in its ascending part and once in its
descending part at which point its slope changes. The only other points at which
the slope changes are the boundaries of cell, meaning at the $s_k$'s.

\begin{itemize}
  \item Let $s\leq s_k$ such that $t^*(s) = t_k(s)$, then
    $K(s^+) = K(s^-) \uplus \{k\}$
  \item Let $s = s_k$,
    $\frac{\partial t^*}{\partial s}(s^+)
      = \frac{\partial t^*}{\partial s}(s^-)
        - \frac2{\sqrt2}\cdot \frac{w_k }{dw_0 + \sum_{k\in K(s)} w_k}$
  \item Let $s\geq s_k$ such that $t^*(s) = t_k(s)$, then
    $K(s^+) = K(s^-) \backslash \{k\}$
\end{itemize}
Moreover, because the $(s_k, t_k)$ are Pareto points, $t^*$ will meet the
ascending part in ascending order, the $s_k$ in ascending order and the
descending part in ascending order.

Also, at $-\infty$, $t^*(s) = 0$ and it has null derivative. Using that and the
rules above, we can track $t^*(s)$ in an efficient way. An example is provided
in Figure~\ref{fig:samplet_star}.
\begin{figure}[ht]
  \centerline{\includegraphics[width=\linewidth]{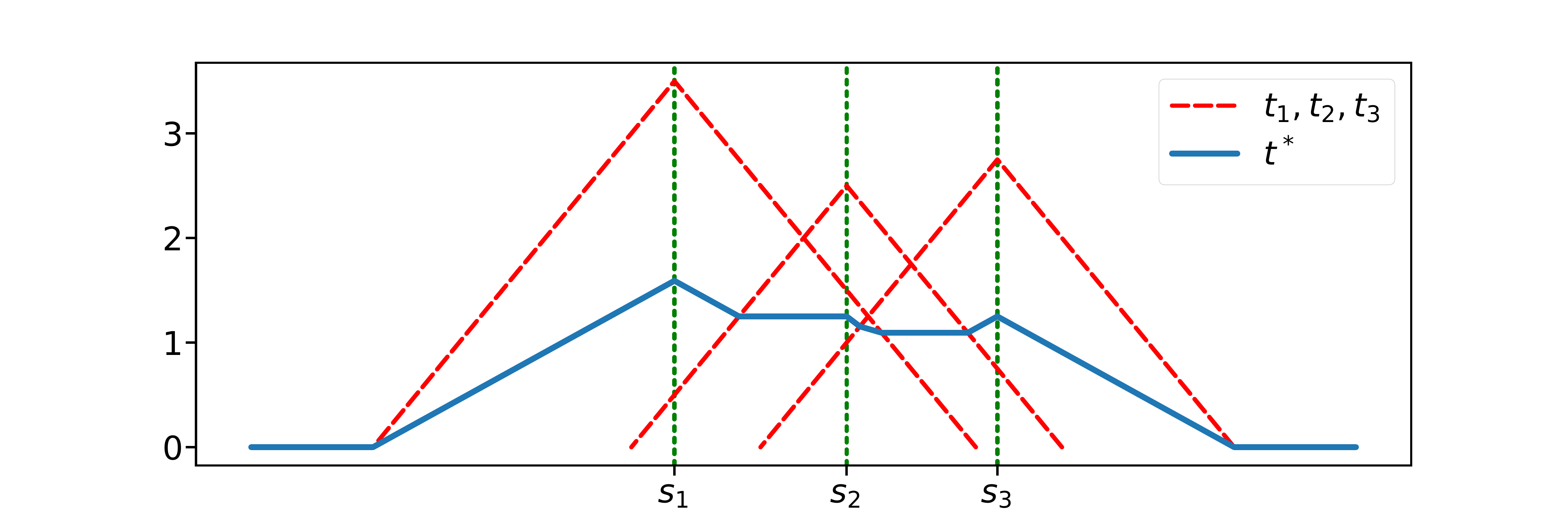}}
  \caption{Example of $t^*$ tracking}
  \label{fig:samplet_star}
\end{figure}
\end{proof}

We can also compute the minimizer and the minimal value of $g_{k_0}(s, t^*(s))$
on a linear part of $t^*(s)$ by differentiating $g_{k_0,\varphi}(s,t)$ in $s$ with
$K(s)$ held constant. This gives rise to an algorithm which has
complexity of order $p\log p$ where you need $p\log p$ operations to sort the
Pareto front and then with complexity $p$ you can range over the $s$-space
keeping track of some sums and of $t^*$ and computing the minimal value of
$g_{k_0}(s,t^*(s))$ along with its minimizer. This reduces the run-time cost of
the transport cost computation for adding a point from $p^2$ to $p\log p$ and
for any point to $K p + p \log p$ as the sorting cost can be dampened.

Moreover, while running Track-and-Stop, only a single point is ever updated
between two subsequent minimal transportation cost computations meaning that the
sort can be updated in linear time.

\subsection{Experiments}
We made some experiments to highlight the gain from using the improved algorithm
for 2d rather than the generic one. For that, we picked $p$ points forming a
Pareto set in a $10×10$ square and we added a unique non Pareto optimal point at
$0$ in our point cloud. Then we measured the time used by each algorithm to
compute the minimum transportation cost against a random vector of weights. We
repeated the operation a thousand times for each tested $p$. We then reported
$t_2$ the average time taken by the improved algorithm for an iteration, $t_n$
the average time taken by the generic algorithm for an iteration and $r$ the
ratio between $t_n$ and $t_2$. These experiments were done on a single core of
an Intel(R) Core(TM) i5-6300U CPU.

\begin{table}[ht]
  \caption{Comparison between the improved and generic algorithm}
  \begin{center}
    \begin{tabular}{|r|c|c|c|} \hline
      $p$ & $t_2$ (s) & $t_n$ (s) & $r$\\\hline
      2 & \num{7.33e-04} & \num{1.01e-03} & $1.38$ \\
      4 & \num{1.05e-03} & \num{1.97e-03} & $1.88$ \\
      8 & \num{1.69e-03} & \num{4.47e-03} & $2.65$ \\
      16 & \num{3.00e-03} & \num{1.25e-02} & $4.17$ \\
      32 & \num{5.68e-03} & \num{3.47e-02} & $6.11$ \\
      64 & \num{1.02e-02} & \num{1.16e-01} & $11.4$ \\
      128 & \num{2.25e-02} & \num{4.48e-01} & $19.9$ \\
      256 & \num{4.91e-02} & \num{1.77e+00} & $36.0$ \\
      512 & \num{8.77e-02} & \num{7.66e+00} & $87.3$ \\ \hline
    \end{tabular}
  \end{center}
\end{table}

\section{Experiments}\label{app:experiment}
In this Appendix, we give additional details about the experiments we ran. The
first round of experiments was based on resampling real life data from the
study~\cite{Munro2021}. We report in Table~\ref{tab:expmean} and
\ref{tab:expvar}, the value from the study that we used to generate the data.
Moreover the last column of Table~\ref{tab:expmean} gives the optimal weight
associated with each arm obtained by solving the optimization problem of
Proposition~\ref{prop:lbd}. Lines written in a bold font correspond to the
Pareto optimal arms and the underlined entry highlights a non-Pareto optimal arm
that needs a lot of samples.

\begin{table}[ht]
  \caption{Means and optimal weights of the different arms}
  \label{tab:expmean}
  \begin{center}
    \begin{tabular}{|c|c|c|c|c|l|}\hline
      Dose 1/Dose 2 & Dose 3 (booster) & Anti-spike IgG & NT$._{50}$ & cellular
        response & \multicolumn{1}{c|}{$w^*$} \\\hline
      \multirow{10}{100pt}{Prime BNT/BNT}& ChAd & 9.5 & 6.86 & 4.56 & 0.0077 \\
      & NVX & 9.29 & 6.64 & 4.04 & 0.0016 \\
      & NVX Half & 9.05 & 6.41 & 3.56 & 0.0007 \\
      & BNT & 10.21 & 7.49 & 4.43 & 0.023 \\
      & BNT Half & 10.05 & 7.2 & 4.36 & 0.0048 \\
      & VLA & 8.34 & 5.67 & 3.51 & 0.00066 \\
      & VLA Half & 8.22 & 5.46 & 3.64 & 0.00079 \\
      & Ad26 & 9.75 & 7.21 & 4.71 & 0.018 \\
      & \textbf{m1273} & \textbf{10.43} & \textbf{7.61} & \textbf{4.72} &
        \textbf{0.14} \\
      & CVn & 8.94 & 6.19 & 3.84 & 0.0011 \\ \hline
      \multirow{10}{100pt}{Prime ChAd/ChAd}& ChAd & 7.81 & 5.26 & 3.97 & 0.0014
      \\
      & NVX & 8.85 & 6.59 & 4.73 & 0.021 \\
      & NVX Half & 8.44 & 6.15 & 4.59 & 0.0089 \\
      & BNT & 9.93 & 7.39 & 4.75 & 0.025 \\
      & BNT Half & 8.71 & 7.2 & 4.91 & \underline{0.35} \\
      & VLA & 7.51 & 5.31 & 3.96 & 0.0014 \\
      & VLA Half & 7.27 & 4.99 & 4.02 & 0.0015 \\
      & Ad26 & 8.62 & 6.33 & 4.66 & 0.013 \\
      & \textbf{m1273} & \textbf{10.35} & \textbf{7.77} & \textbf{5.0} &
        \textbf{0.38} \\
      & CVn & 8.29 & 5.92 & 3.87 & 0.0012 \\ \hline
    \end{tabular}
  \end{center}
\end{table}

\begin{table}[ht]
  \caption{Pooled variance for each immunogenicity trait}
  \label{tab:expvar}
  \begin{center}
    \begin{tabular}{|c|c|c|c|}
      \hline
      & Anti-spike IgG & NT$._{50}$ & cellular response \\ \hline
      Variance & 0.70 & 0.83 & 1.54 \\ \hline
    \end{tabular}
  \end{center}
\end{table}

\end{document}